\newcommand{\E}{\mathbb{E}}
\newcommand{\pr}{\mathbb{P}}
\newcommand{\R}{\mathbb{R}}
\DeclareMathOperator*{\argmin}{arg\,min}
\newtheorem{proposition}{Proposition}
\title{Open-Set Recognition with Gaussian Mixture Variational Autoencoders}
\author{%
  Alexander Cao\\
  Department of Industrial Engineering and Management Sciences\\
  Northwestern University\\
  Evanston, IL 60208\\
  \texttt{a-cao@u.northwestern.edu}\\
  % more authors
  \And
  Yuan Luo\\
  Department of Preventive Medicine\\
  Northwestern University\\
  Chicago, IL 60611\\
  \texttt{yuan.luo@northwestern.edu}\\
  \And
  Diego Klabjan\\
  Department of Industrial Engineering and Management Sciences\\
  Northwestern University\\
  Evanston, IL 60208\\
  \texttt{d-klabjan@northwestern.edu} \\
}
\begin{document}

\maketitle

\begin{abstract}
In inference, open-set classification is to either classify a sample into a known class from training or reject it as an unknown class. Existing deep open-set classifiers train explicit closed-set classifiers, in some cases disjointly utilizing reconstruction, which we find dilutes the latent representation's ability to distinguish unknown classes. In contrast, we train our model to cooperatively learn reconstruction and perform class-based clustering in the latent space. With this, our Gaussian mixture variational autoencoder (GMVAE) achieves more accurate and robust open-set classification results, with an average F1 improvement of 29.5\%, through extensive experiments aided by analytical results.
\end{abstract}

\section{Introduction}
Until recently, nearly all classification algorithms have been designed for closed-set evaluation. This means that all testing classes are seen in training. However, real-world applications necessitate open-set evaluation where unknown classes, not seen in training, appear during testing. For instance, computer vision systems in self-driving cars must classify and navigate around many different objects. Given the countless number of such possible objects, it is infeasible for all classes to be seen in training \cite{limits}. Open-set recognition addresses this generalization of the classification task. 

While there are several facets of open-set learning, in this paper we focus on training from $C$ known classes for $(C+1)$-class classification. This $(C+1)$-th class catches all unknown test samples not belonging to any of the known classes. The training data has no unseen classes from class $C+1$. To this end, we present a novel supervised, Gaussian mixture variational autoencoder (GMVAE). The bottleneck latent layer simultaneously learns reconstruction and performs class-based clustering. This allows the latent representation to capture complementary structure and classifier information. Furthermore, the latent layer has the explicit capability to form multiple subclusters per class. This challenges the implicit assumption made by many classification methods that a class's embedding is a convex set and thus is best represented by a single centroid \cite{TowardsOpenSetDeepNetworks, inter-intra, crosr}. This provides further flexibility in capturing complementary structure and classifier information

Our contributions are as follows. In \S \ref{section:model}, we derive GMVAE to learn the embedding and amend its objective function to make open-set recognition more amenable. We also present a new and simple open-set classification algorithm that utilizes an ``uncertainty'' threshold on the learned embedding. Following in \S \ref{section:analytical}, we present analytical results regarding the number of subclusters and the resulting heuristic procedure for identifying the appropriate number of subclusters in each class. Finally in \S \ref{section:experiments}, we conduct open-set classification experiments on three standard datasets. Our findings from experiments are two-fold. First, GMVAE outperforms the state-of-the-art deep open-set classifier both in terms of accuracy and robustness to an increasing number of unknown classes. Second, the use of extreme value theory (EVT) to infer class-belongingness \cite{TowardsOpenSetDeepNetworks, crosr} may be ill-suited as we find that ours and another simple algorithm consistently beat it. 

\section{Related work}
While closed-set classification has been well-studied, open-set recognition has been somewhat dormant with the majority of its work appearing in the last decade. Outlier or novelty detection is a precursor but is not generally concerned with distinguishing between the known classes \cite{survey, anomaly}. Earlier works that study $(C+1)$-class classification utilize, for example, SVM scores \cite{TowardsOpenSetRecognition, ProbabilityofInclusion} or sparse representation \cite{sparse} to fit EVT-based densities to predict classes. The use of deep networks in open-set recognition appears even more recently in studies such as \cite{TowardsOpenSetDeepNetworks, crosr}. Both use similar procedures of fitting EVT-based densities to the distances between a class's embedding and its centroid to approximate probability of class inclusion. 

In this paper, our experimental results are benchmarked against the Classification-Reconstruction learning for Open-Set Recognition (CROSR) method \cite{crosr}. We chose this particular benchmark as it achieves state-of-the-art open-set classification accuracies and it relies on a similar framework of dual reconstruction-classification learning.

We next summarize CROSR. The latent representation is a concatenation $[y,z]$ where $y$ is the activation vector of a closed-set, softmax classifier and $z$ is the reconstructive latent representation. To learn an effective $y$ and $z$ concurrently, \cite{crosr} introduced Deep Hierarchical Reconstruction Nets (DHRNets). Conceptually, the DHRNet architecture is a deep classifier $f$ with autoencoder networks $h_l, \widetilde{h}_l$ appended at the internal layers $x_l$. Thus, bottleneck representations can be extracted from multi-stage features of the classifier. The autoencoders' reconstructions then form a reverse network to reconstruct the original input. Mathematically, the main-body network $f(x) = (y,z)$ is comprised of
\begin{align}
x_{l+1} &= f_l (x_l) \quad \text{$l$-th layer of the DHRNet classifer},~ z_l = h_l (x_l) \quad \text{encoder network for $l$-th layer} \\
\widetilde{x}_l &= g_l(\widetilde{x}_{l+1} + \widetilde{h}_l (z_l)) \quad \text{decoder network $\widetilde{h}_l$ and reconstruction network $g_l$ for $l$-th layer}
\end{align}
where networks are a series of convolutions and up or down-sampling layers. For training, \cite{crosr} minimizes the sum of the cross-entropy classification error and the $L_2$ reconstruction errors.

With latent representation $[y,z]$ in hand, CROSR applies EVT by fitting a Weibull distribution to the hypersphere defined by $d(x, C_i) = | [y,z] - \mu_i |_2$ where $\mu_i$ is the respective mean within class $C_i$. A proxy for probability of class inclusion is then given by
$
\pr (x \in C_i) =  1 - \text{WeibullCDF}(d(x, C_i); \rho_i) =  \exp \left\{ - \left(  \frac{d(x,C_i)}{\eta_i}  \right)^{m_i} \right\}
$
and thresholding is then used to classify a sample as ``unknown.'' Here $m_i$ and $\eta_i$ are parameters of the distribution fitted from class $C_i$'s training data.

In contrast to DHRNets, Gaussian mixture variational autoencoders from \cite{gmvae} are deep generative models which estimate the density of training data under assumptions on its latent prior. This could lead to more complex latent structures than in classification-based models. However, inference in this unsupervised setting is challenging, especially with open-set recognition. We address this by extending this deep generative model to supervised learning including capturing subclusters within classes.

\section{Gaussian mixture variational autoencoders} \label{section:model}
In this section we present our complete, novel procedure for open-set recognition. It follows the same two phases as previous works: first, learn a latent representation to (sub)cluster known classes, and second, apply an open-set classification algorithm on that embedding. Our GMVAE model is an extension of the Gaussian mixture variational autoencoder presented in \cite{gmvae} and explained next. 

Variational autoencoders (VAEs) assume data is generated from a uni-modal Gaussian prior. In \cite{gmvae}, the authors instead choose a mixture of Gaussians as an intuitive extension. In order to maintain standard backpropagation via the reparametrisation trick, the standard VAE architecture was altered. The generative model, factorizing as $p_{\beta, \theta}(x,z,w,v) = p(w) p(v) p_\beta  (z | w,v) p_\theta (x|z)$, generates a sample $x$ from the latent variables $z$, $w$, and $v$ with the following process
\begin{align}
w & \sim \mathcal{N} (0, I), \quad v \sim \text{Mult} (\pi) \\
(z|w,v) &\sim \prod_{k=1}^K \mathcal{N} \left(\mu_{k} (w; \beta), \text{diag}\left( \sigma^2_{k} (w; \beta)\right) \right)^{v_k} \\
(x | z) &\sim \mathcal{N} \left( \mu(z; \theta), \text{diag} \left( \sigma^2(z; \theta) \right) \right) \quad \text{or} \quad \mathcal{B} \left( \mu(z; \theta)\right)
\end{align}
where $K$ is the user-defined number of mixture components and $\mu_{k} (\cdot; \beta)$, $\sigma^2_{k} (\cdot; \beta)$, $\mu(\cdot; \theta)$, and $\sigma^2(\cdot; \theta)$ are neural networks parametrized by $\beta$ and $\theta$, respectively. The recognition model is then factorized as $q(z,w,v|x) = q_{\phi_z} (z|x) q_{\phi_w} (w|x) p_\beta (v|z,w)$ where $\phi_z$  and $\phi_w$ parametrize neural networks that output means and diagonal covariances of the Gaussian posterior variational distributions. Using Bayes' rule, the $v$-posterior term $p_\beta (v|z,w)$ can be written in terms of factors of the generative model. To train, the log-evidence lower bound (ELBO) $\E_{q(z,w,v|x)} \left[p_{\beta, \theta}(x,z,w,v)  / q(z,w,v|x) \right]$ is maximized. In \S \ref{section:s-gmvae} and \ref{section:no-v}, we present the derivation and differences of our GMVAE. Finally we introduce our new open-set classification algorithm that utilizes an ``uncertainty'' threshold in \S \ref{section:algorithms}.

\subsection{Gaussian mixture variational autoencoders with multiple subclusters per class} \label{section:s-gmvae}
Our GMVAE model nontrivially extends the unsupervised learning framework of \cite{gmvae} to essentially a Gaussian mixture prior for each class. For notation, there are $C$ known classes with each class composed of $K_c$ subclusters where $c = 1, 2, ..., C$. The samples $x \in \R^d$ and labels $y\in\R^C$ as one-hot vectors comprise the labeled, known data set $(x,y) \in \mathcal{X}$. The GMVAE's generative process $p_{\beta,\theta}(x,v,w,z|y) = p_\theta (x|z) p_\beta(z|w,y,v)p(w) p(v|y)$ is conditioned on class and given by
\begin{align}
w &\sim \mathcal{N} (0, I), \quad (v | y ) \in \R^{K_c} \sim \text{Mult} (\pi(y)) \\
(z|w,y,v) &\sim \prod_{c=1}^C \prod_{k=1}^{K_c} \mathcal{N} \left(\mu_{ck} (w; \beta), \text{diag}\left( \sigma^2_{ck} (w; \beta)\right) \right)^{y_c\cdot v_k} \\
(x | z) &\sim \mathcal{B} \left( \mu(z; \theta)\right). 
\end{align}
It is common to take $\pi(y)$ to simply be uniform for each class. The recognition model is factorized as $q_\phi (v,w,z|x,y)  = p_\beta (v | z,w,y)q_{\phi_w}(w|x,y)q_{\phi_z}(z|x)$ where $\phi = (\phi_x, \phi_w)$. We parametrize variational factors with networks  $\phi$ that output mean and diagonal covariance of variational distributions and specify their form to be Gaussian posteriors:
\begin{align}
(z|x) & \sim \mathcal{N}  \left( \mu(x; \phi_z), \text{diag} \left( \sigma^2(x; \phi_z) \right) \right)  \\
(w|x,y) & \sim \mathcal{N}  \left( \mu(x,y; \phi_w), \text{diag} \left( \sigma^2(x,y; \phi_w) \right) \right). 
\end{align}
There is a $p_\beta$ factor in the $q_\phi$ factorization because the $p_\beta$ factor can be written in terms of generative factors, lowering the number of trainable parameters. Using Bayes', we can rewrite $p_\beta (v | z,w,y)$ as
\begin{align}
p_\beta (v | z,w,y) &= \frac{p_\beta (z|w,y,v) p(v|y)}{\sum_{v'} p_\beta (z|w,y,v') p(v'|y)}. \label{eq:v-post-bayes}
\end{align}
The details are provided in the supplementary material. Another benefit is that $p_\beta (v | z,w,y)$ can be computed for all $v$ with simply one forward pass. The GMVAE's ELBO is then given by
\begin{align}
&\mathcal{L} (K)  = \E_{q_\phi (v,w,z|x,y)} \left[ \log \frac{p_{\beta,\theta}(x,v,w,z|y)}{q_\phi (v,w,z|x,y)} \right] \\
&= \E_{q_{\phi_z} (z|x)}\left[ \log p_\theta (x|z) \right] \quad \text{(reconstruction)} \\
&- \mathbb{E}_{q_{\phi_w}(w|x,y) q_{\phi_z}(z|x)  } \left[ \log q_{\phi_z}(z|x)  - \sum_{j=1}^{K_c} p_\beta ( v=j|z, w, y)  \log p_\beta ( z|w, y, v=j)  \right] \begin{pmatrix}
\text{latent} \\
\text{covering}
\end{pmatrix} \\
&- KL(q_{\phi_w}(w|x,y) || p(w) ) \quad \text{($w$-prior)}\\
&- \mathbb{E}_{q_{\phi_w}(w|x,y) q_{\phi_z}(z|x)} \left[ KL( p_\beta ( v|z, w, y)  || p (v|y) ) \right] \quad \text{(subcluster $v$-prior)}.
\end{align}
Since $K=(K_1, K_2, ..., K_C)$ is user-defined, the ELBO dependence on $K$ is made explicit and used later in the analyses. The reconstruction term promotes a latent representation meaningful to reconstruct the samples. The latent covering term attempts to subcluster the latent representation based on classes. The $w$-prior and subcluster $v$-prior terms drive those posteriors closer to their respective priors.

\subsection{Modification of the ELBO: removing $v$-prior} \label{section:no-v}
In this subsection, we propose removing the $v$-prior term from the original ELBO to make GMVAE more amenable to open-set recognition for two reasons. First, minimizing the $v$-prior term $\mathbb{E}_{q_{\phi_w}(w|x,y) q_{\phi_z}(z|x)} \left[ KL( p_\beta ( v|z, w, y)  || p (v|y) ) \right]$ is in direct conflict with the goal of distinct subclustering within a class. Our goal is to create disjoint subclusters in a class's latent representation so as to further provide reconstruction more flexibility and alleviate the assumption that a class's embedding is a convex set. However, notice that the $v$-prior term is minimized when $p_\beta ( v|z, w, y) = p (v|y)$ for every $z$, $w$, and $y$. Combined with \eqref{eq:v-post-bayes} and a uniform $p(v|y)$, this in turn implies that $p_\beta (z|w,y,v=i) = p_\beta (z|w,y,v=j)$ for every $w$, $y$, $i$, and $j$. Equivalent generative model distributions leads to mode collapse in the latent subclusters due to the maximization of the latent covering term. Put differently, the $v$-prior term discourages one-hot subcluster $v$ posteriors. However, this is exactly what is needed to robustly identify subclusters.

Second, as proven in Proposition \ref{thm:quasi-nonincreasing} in \S \ref{section:analytical}, without the $v$-prior term the optimal GMVAE loss for $C=1$ is non-increasing with respect to $K$. This is an analytical result which provides a heuristic procedure for identifying the appropriate number of subclusters $K_c$ to use for each class. Given these two reasons, for all the experiments in \S \ref{section:experiments}, we used the following modified ELBO:
\begin{align}
\mathcal{L}_\text{no $v$-prior} (K)  &= \E_{q_{\phi_z} (z|x)}\left[ \log p_\theta (x|z) \right] - KL(q_{\phi_w}(w|x,y) || p(w) )  \\
&- \mathbb{E}_{q_{\phi_w}(w|x,y) q_{\phi_z}(z|x)  } \left[ \log q_{\phi_z}(z|x)  - \sum_{j=1}^{K_c} p_\beta ( v=j|z, w, y)  \log p_\beta ( z|w, y, v=j)  \right].
\end{align}
In a sense, it is as if we do not impose a prior on the subcluster distributions. While we could have also negated the $v$-prior term, simply removing it actually yields the best experimental results.

\subsection{Open-set classification algorithms} \label{section:algorithms}
With recent literature in open-set recognition, it has nearly become universal to model class-belongingness by fitting a Weibull distribution to the inlier distances between a class's latent representations and its centroid \cite{TowardsOpenSetDeepNetworks, inter-intra, crosr}.
Indeed, the benchmark method CROSR \cite{crosr} achieves state-of-the-art accuracies through this EVT framework. However our experiments demonstrate that two much simpler algorithms can significantly outperform CROSR's EVT-based classification algorithm. While fitting a distribution to the inlier distances may be an effective way to conform a hypersphere-density around a class's centroid, we believe it is much too sensitive to these inliers. If a class has samples whose latent representations are ``misclassified'' and far away from its centroid, then the resulting distribution fit will be extremely skewed and render inaccurate predictions. This possible negative effect is severely magnified for embeddings that do not optimize for low intra-spread within each class. For instance, CROSR's embedding is composed of the closed-set, softmax classifier's activation vector; this encourages elements of that vector to tend towards positive and negative infinity. 

Next we present the two simple open-set classification algorithms we implemented. While GMVAE outputs a Gaussian distribution in latent space, we simply choose the mean $\mu(x; \phi_z)$ as the effective latent representation. Algorithm \ref{alg:nc-d} is derived from the so-called outlier score from \cite{inter-intra} but is most aptly described as nearest centroid thresholding on distance to the nearest centroid. This algorithm is modified to incorporate multiple subclusters per class. 

\begin{algorithm}[h!] 
Input: Training samples $\mathcal{X}_c$ for each known class $c=1,2,...,C$ and test sample $\widehat{x}$ \\
1. For each class $c$, compute $K_c$ centroids of $\mu(\mathcal{X}_c; \phi_z)$ using $k$-means clustering. Denote centroid $\overline{z}_{ck}$ as $k$-th centroid of class $c$. \\
2. Let $(c^*, k^*) = \argmin_{c, k} || \mu(\widehat{x}; \phi_z) - \overline{z}_{ck} ||_2$  and $d = \min_{c, k} || \mu(\widehat{x}; \phi_z) - \overline{z}_{ck} ||_2$ \\
3. If $d <  \tau$, predict class as  $c^*$; else, predict class as unknown $C+1$
\caption{Nearest centroid thresholding on distance to the nearest centroid}
\label{alg:nc-d}
\end{algorithm} 

Experimental results show that thresholding on distance to the nearest centroid more robustly fits a hypersphere around the respective centroid. However, a similar shortcoming shared with CROSR's EVT method is that distance is a rotationally symmetric measure. It does not include any sense of orientation. We stand to reason that in any nearest centroid-based algorithm, the open space between centroids poses the most risk from an open-set classification standpoint. This leads into the second algorithm which utilizes a novel threshold on an ``uncertainty'' quantity $U$. We define $U$ as the ratio between the distance to the nearest centroid to the average distance to all other centroids. So if $U=1$, the test sample's latent representation is equidistant from all centroids which can be interpreted as unclassifiable. If $U=0$, the test sample's latent representation is exactly a centroid meaning there is no ambiguity in classification. In this way, Algorithm \ref{alg:nc-u} includes a notion of orientation between centroids as $U$ penalizes the open space directly between centroids more heavily. This is reminiscent of the nearest neighbors distance ratio of \cite{nearest-neighbor}.

\begin{algorithm}[h!]
Input: Training samples $\mathcal{X}_c$ for each known class $c=1,2,...,C$ and test sample $\widehat{x}$ \\
1. For each class $c$, compute $K_c$ centroids of $\mu(\mathcal{X}_c; \phi_z)$ using $k$-means clustering. Denote centroid $\overline{z}_{ck}$ as $k$-th centroid of class $c$. \\
2. Let $(c^*, k^*) = \argmin_{c, k} || \mu(\widehat{x}; \phi_z) - \overline{z}_{ck} ||_2$, $N = \sum_{c=1}^C K_c$, and
\begin{align}
U = \frac{\min_{c, k} || \mu(\widehat{x}; \phi_z) - \overline{z}_{ck} ||_2}{\frac{1}{N-1} \sum_{(c, k)\neq(c^*, k^*)} || \mu(\widehat{x}; \phi_z) - \overline{z}_{ck} ||_2}
\end{align}
3. If $U < \tau$, predict class as  $c^*$; else, predict class as unknown $C+1$
\caption{Nearest centroid thresholding on uncertainty $U$}
\label{alg:nc-u}
\end{algorithm}

\section{Identifying the number of subclusters in each class} \label{section:analytical}
Since the number of subclusters in each class is user-defined, identifying the appropriate number is critical for model usage. A natural procedure that immediately arises is to iteratively apply GMVAE to each class's data alone for an increasing number of subclusters $K_c$. Given the reconstruction and clustering objectives, the empirical model losses should naturally inform us of the optimal number of subclusters. This is akin to increasing $k$ in $k$-means clustering and studying the resulting inertia plot. To this end, in this section we first present analytical results regarding the effect of $K = K_1$ on the optimal $C=1$ (single class) GMVAE loss. This then leads to our heuristic procedure for identifying the ideal number of subclusters in each class.  

With two unrestrictive neural network assumptions, we are able to prove two propositions regarding the effect of $K$ on the optimal GMVAE loss. The assumptions and proofs can be found in the supplementary material. The first proposition demonstrates that when there truly is only one subcluster within a class, and we know its distribution, then the optimal loss is constant with respect to $K$. Since $C=1$, we write $x$ instead of $(x,y)$.
\begin{proposition}
Let us assume that $x\in\mathcal{X}$ is distributed as $x\sim p_\text{data} = \mathcal{B}(\mu_x)$, $C=1$, and Assumption 1 holds. Then the optimal GMVAE loss is constant with respect to $K$. In fact, we have that $\min -\E_\mathcal{X} [\mathcal{L} (K)] =  -\E_\mathcal{X} [\log p_\text{data}]$ for every $K\geq1$ and a globally optimal solution reads 
\begin{align}
\mu(x; \phi_z^*) &= \mu_{c=1, k}(w;\beta^*) = \mu_z, \quad \sigma^2(x; \phi_z^*) = \sigma^2_{c=1, k}(w;\beta^*) = \sigma^2_z \\
\mu(x,y; \phi_w^*)& = \vec{0}, \quad \sigma^2(x,y; \phi_w^*) = \vec{1}, \quad \mu(z; \theta^*) = \mu_x
\end{align}
for any constant vectors $\mu_z, \sigma_z$.
\end{proposition}
The second proposition makes no data assumptions but shows that the optimal loss is quasi-non-increasing with respect to $K$. However, it does demonstrates the uniform lower-boundedness of sequential differences.
\begin{proposition} \label{thm:quasi-nonincreasing}
Let us assume $C=1$, Assumptions 1 and 2 hold, and that $p(v|y=1)$ is uniform in the appropriate dimension. We have $
\min\left\{ -\E_\mathcal{X} [\mathcal{L}(K; \phi_z, \phi_w, \beta, \theta)] \right\} - \min \left\{ -\E_\mathcal{X}[\mathcal{L}(K+1; \phi_z, \phi_w, \beta, \theta)]\right\} \geq \epsilon_K
$
where $-\log 2 \leq \log(K/(K+1)) \leq \epsilon_K$ for all $K$.
\end{proposition}
These proofs do not inform us on the transient dynamics of training nor even reaching the global optimum. As such, in the following experimental results section, we apply these propositions in practice by comparing the latent covering loss given reconstruction loss for each $K\geq 1$. This answers: How well does $K$ subclusters ``cover'' the embedding for a given reconstruction level? When the latent covering loss's decreases begins to diminish, then it is an indication that additional subclusters were only marginally beneficial and perhaps should not be included. 

\section{Experimental results} \label{section:experiments}
The experimental results demonstrate several findings. First, EVT-based open-set classification may not be appropriate as simple nearest centroid procedures consistently beat it. Second, even without the added benefit of subclustering, GMVAE for $K  = \vec{1}$ often leads to a latent representation more amenable for open-set recognition compared to CROSR. Finally, subclustering within classes represents a means of bolstering dual supervised-reconstruction embeddings. 

Each dataset has the following composition. The training data has only labeled samples from the $C$ classes. The validation set has samples from the same $C$ classes and samples from additional $Q$ classes which are all treated as class $C+1$. The validation set is used to determine the threshold $\tau$. Finally, the test set has the same distribution as the validation set.

For each of the experiments below, we perform an ablation study. Four combinations of model and classification algorithms were applied: (i) CROSR with CROSR's EVT (CROSR+EVT), (ii) CROSR with Algorithm \ref{alg:nc-d} (CROSR+NC-D), (iii) GMVAE with Algorithm \ref{alg:nc-d} (GMVAE+NC-D), and (iv) GMVAE with Algorithm \ref{alg:nc-u} (GMVAE+NC-U). CROSR+NC-D and GMVAE+NC-D are meant to directly compare latent representations' amenability to open-set recognition. We did not study CROSR with Algorithm \ref{alg:nc-u} because our ``uncertainty'' measure is really a proxy for confidence and it has been shown that it is erroneous to equate softmax classifiers with confidence \cite{confidence}. Correctly adapting ``uncertainty'' to CROSR is out of this paper's scope. For each combination, we calculate the optimal macro-averaged F1 scores (optimizing threshold $\tau$ on the validation set) for an increasing number $Q$ of unknown classes (and samples). However, for actual inference in model serving, we would not have any prior knowledge on the known and unknown composition of the test set. Accordingly, we also compute the F1 scores using the mean of those optimal thresholds. The first two experiments are for $K=\vec{1}$ and in the last two, we manufacture classes with multiple subclusters to apply $K = (2,2)$.

We optimize over the training set using Adam until the loss, evaluated on the known validation set, plateaus. For the MNIST and Fashion MNIST datasets (grayscale images), the reconstruction distribution used was the unnormalized, continuous Bernoulli distribution. For the CIFAR-10 dataset (RGB images), a truncated $[0,1]$ Gaussian models the reconstruction. The latent space dimension of $z$ equals 10, 50, 5, and 20 for the four experiments. We will publish our code upon acceptance of this paper.

\subsection{Fashion MNIST withholding 4 classes}
The six known classes are t-shirts/tops, trousers, pullovers, dresses, coats, and shirts, while the four unknown classes are sandals, sneakers, ankle boots, and bags. Fashion MNIST's standard training set is randomly split into the validation set (6,000 samples of known classes and 4,000 samples of unknown classes) and training set (30,000 samples). Fashion MNIST's standard testing set (10,000 samples) is kept the same. We use the same CROSR network architecture as \cite{crosr} for their MNIST experiment. The $K=\vec{1}$ GMVAE network architectures are given in Table \ref{tab:fashion-networks}. The $\theta$ network is always the mirrored $\phi_z$ network. Convolution parameters are denoted by ``$\langle$number of layers$\rangle$ conv$\langle$kernel size$\rangle$-$\langle$number of channels$\rangle$'' and max pooling strides are denoted by ``maxpool-$\langle$stride$\rangle$.'' ReLU activations follow each weight layer.

\begin{table}[h!]
\centering
\caption{Network architectures for $K=\vec{1}$ GMVAE used in Fashion MNIST experiment.}
\label{tab:fashion-networks}
\begin{tabular}{ ccc }   % top level tables, with 3 columns
$\phi_z$ & $\phi_w$  & $\beta$ \\  
% leftmost table of the top level table
\begin{tabular}[t]{ c } 
\hline
Input: $x$ \\
2 conv3-20 \\ 
maxpool-2 \\
2 conv3-50 \\
maxpool-2 \\
FC-500 \\
FC-100 \\
FC-20 ($2\times \text{dim}(z)$) \\
\hline
\end{tabular} &  % starting center sub table
% table 2
\begin{tabular}[t]{ c} 
\hline
Input: $x, y$ \\
1 conv3-10 on $x$ only \\
maxpool-4 \\
Concatenate with $y$ \\
FC-20 ($2\times \text{dim}(w)$) \\
\hline
\end{tabular} & % starting rightmost sub table
% table 3
\begin{tabular}[t]{ c} 
\hline
Input: $w$ \\
FC-50 \\
FC-50 \\
FC-120 ($6\times2\times \text{dim}(z)$) \\ 
\hline
\end{tabular} \\
\end{tabular}
\end{table}

F1 scores are plotted in Figure \ref{fig:fashion-f1}. On the left are the optimal threshold F1 scores versus the number of unknown classes $Q$. While GMVAE is not as accurate in the closed-set regime, it outperforms CROSR as $Q$ increases. CROSR's open-set accuracies, in turn, quickly diminish as $Q$ increases, CROSR+EVT in particular. On the right is the mean threshold F1 scores where GMVAE's F1 scores are still more robust to increasing $Q$. For $Q \geq 1$ and using the mean thresholds, GMVAE+NC-U F1 scores are on average 3.9\% greater than those of CROSR+NC-D. 

\begin{figure}[h!]
\centerline{
  \includegraphics[width=0.38\linewidth]{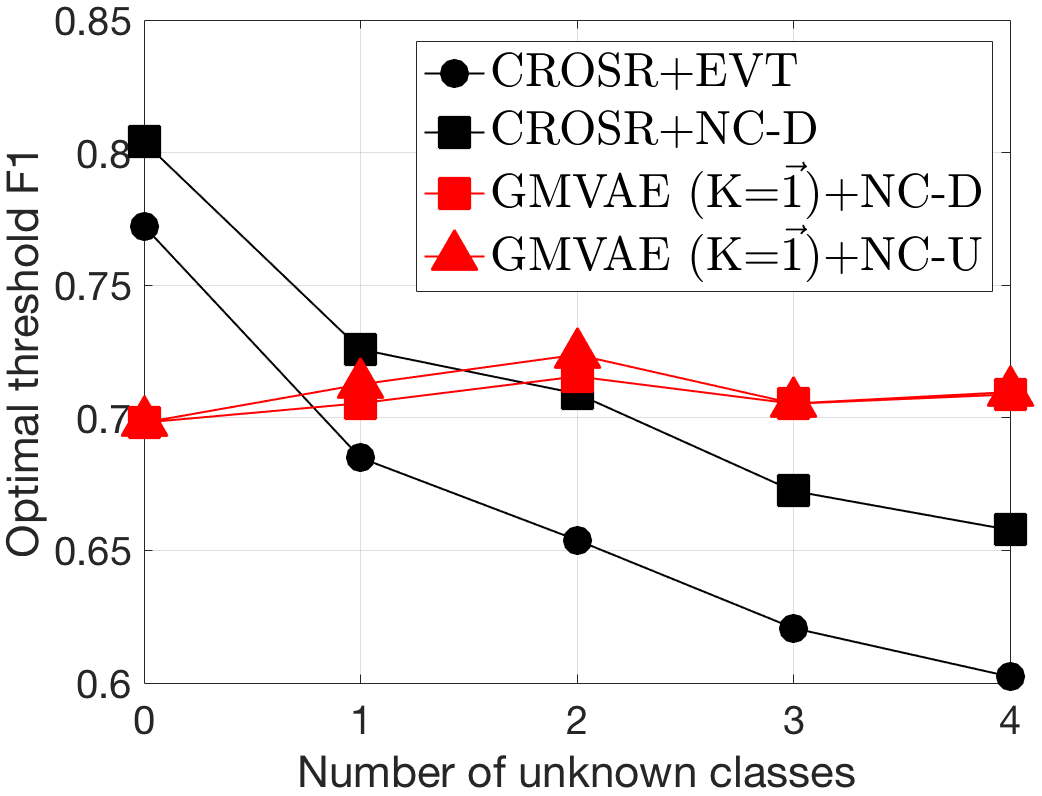} 
  \includegraphics[width=0.38\linewidth]{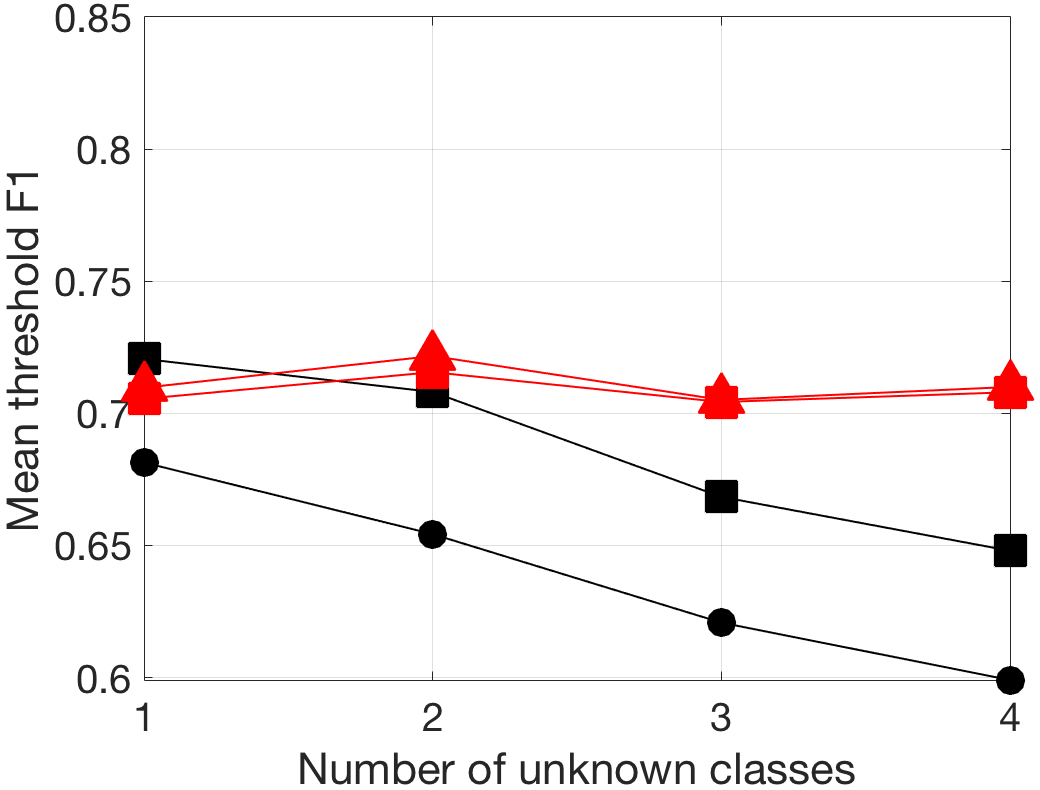}
}
\caption{Fashion MNIST open-set test F1 scores.}
\label{fig:fashion-f1}
\end{figure}

\subsection{CIFAR-10 withholding 4 classes}
The six known classes are airplanes, automobiles, birds, cats, deer, and dogs. The four unknown classes are frogs, horses, ships, and trucks. CIFAR-10's standard training set is randomly split into the validation set (6,000 samples of known classes and 4,000 samples of unknown classes) and training set (24,000 samples). CIFAR-10's standard testing set (10,000 samples) is kept the same. For both CIFAR-10 experiments, we use the same CROSR architecture as \cite{crosr} for their CIFAR-10 experiment. The $K=\vec{1}$ GMVAE network architectures are given in Table \ref{tab:cifar10-k1-networks}. For both CIFAR-10 experiments, the $\phi_z$ network is pretrained on the known classes and those weights are then frozen.

\begin{table}[h!]
\centering
\caption{Network architectures for $K=\vec{1}$ GMVAE used in the first CIFAR-10 experiment.}
\label{tab:cifar10-k1-networks}
\begin{tabular}{ ccc }   % top level tables, with 3 columns
$\phi_z$ & $\phi_w$  & $\beta$ \\  
% leftmost table of the top level table
\begin{tabular}[t]{ c } 
\hline
Input: $x$ \\
2 conv3-64 \\ 
maxpool-2 \\
2 conv3-128 \\
maxpool-2 \\
4 conv3-256 \\
maxpool-2 \\
FC-1000 \\
FC-500 \\
FC-100 ($2\times \text{dim}(z)$) \\
\hline
\end{tabular} &  % starting center sub table
% table 2
\begin{tabular}[t]{ c} 
\hline
Input: $x,y$ \\
1 conv3-10 on $x$ only \\
maxpool-4 \\
Concatenate with $y$ \\
FC-100 ($2\times \text{dim}(w)$) \\
\hline
\end{tabular} & % starting rightmost sub table
% table 3
\begin{tabular}[t]{ c} 
\hline
Input: $w$ \\
FC-100 \\
FC-100 \\
FC-600 ($6\times2\times \text{dim}(z)$) \\ 
\hline
\end{tabular} \\
\end{tabular}
\end{table}

F1 scores are plotted in Figure \ref{fig:cifar10-k1-f1}. GMVAE consistently beats CROSR and again CROSR+EVT performs worst. Algorithm \ref{alg:nc-u} augments GMVAE and we deduce this is because unknown CIFAR-10 samples are more difficult to distinguish and thus more likely to be embedded to the interior of known latent clusters where ``uncertainty'' has more influence. For $Q \geq 1$ and using the mean thresholds, GMVAE+NC-U F1 scores are on average 61.3\% greater than those of CROSR+NC-D. 

\begin{figure}[h!]
\centerline{
  \includegraphics[width=0.38\linewidth]{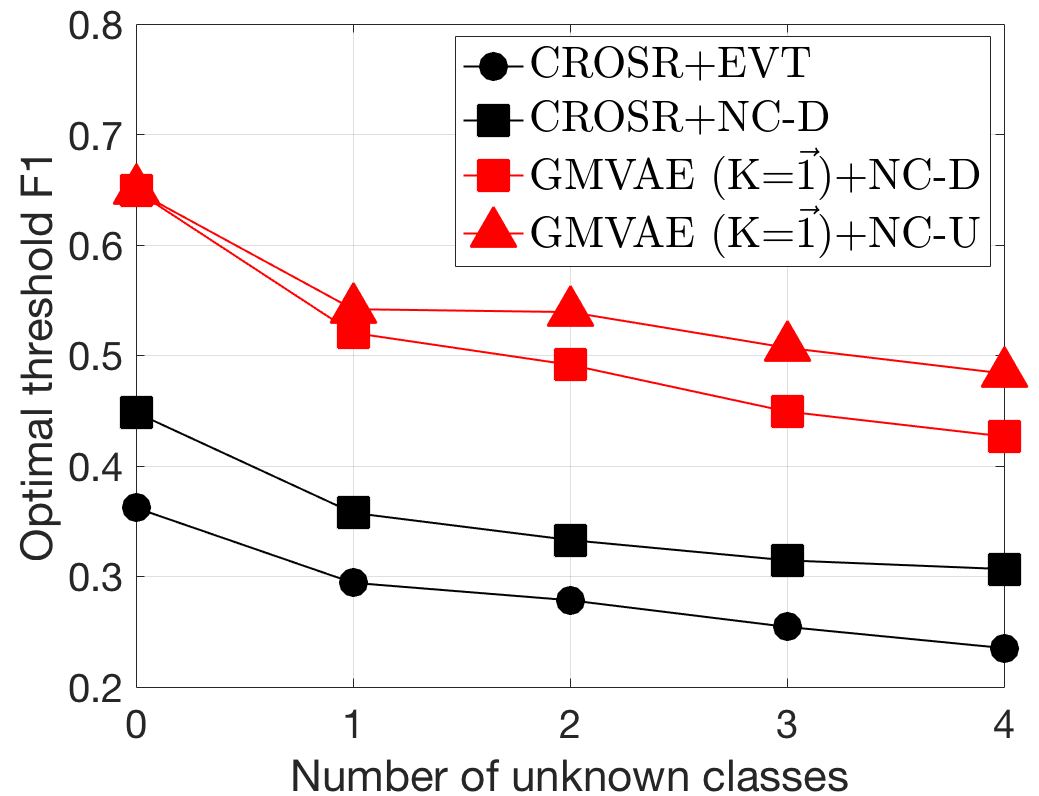} 
  \includegraphics[width=0.38\linewidth]{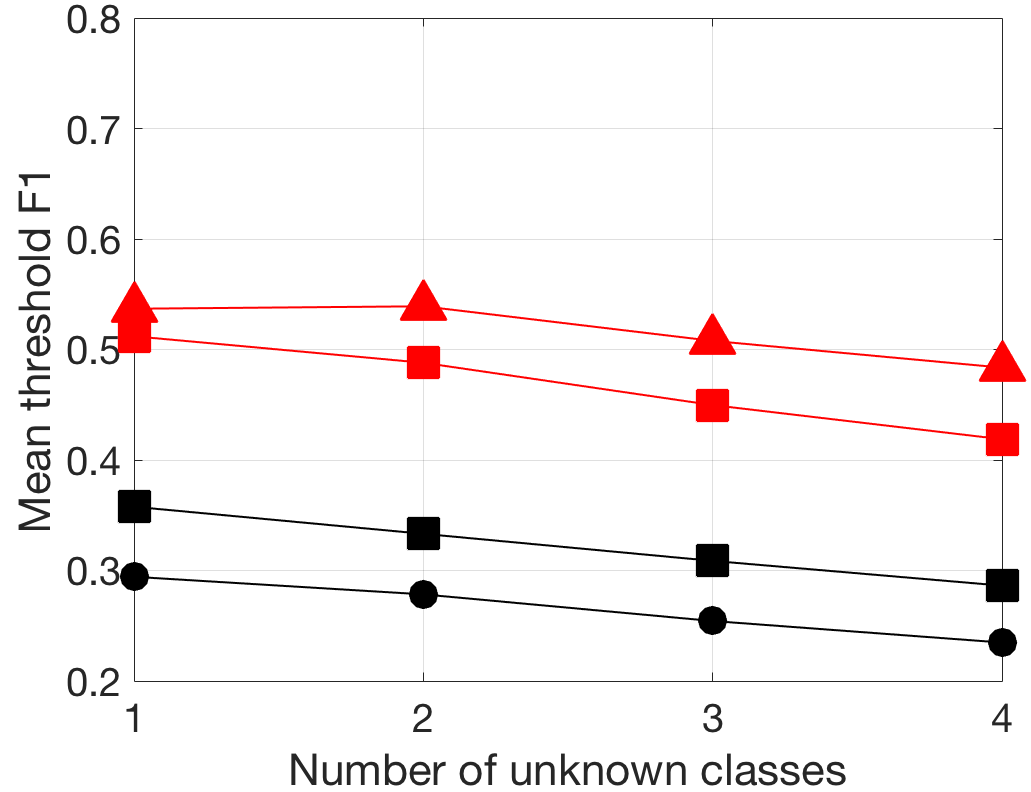}
}
\caption{$K=\vec{1}$ CIFAR-10 open-set test F1 scores.}
\label{fig:cifar10-k1-f1}
\end{figure} 
We present a t-SNE plot of the CROSR latent representation components in Figure \ref{fig:cifar10-k1-crosr-tsne} to bring into question the explicit use of classifier activation vectors in an open-set recognition embedding. We see that the reconstruction latent variable $z$ does little to cluster the known classes and so open-set classification is dominated by the known classifier's activation vector $y$. We believe this to be the underlying reason why CROSR's F1 scores in Figures \ref{fig:cifar10-k1-f1} and \ref{fig:cifar10-k2-f1} are so poor.

\begin{figure}[h!]
\centerline{
  \includegraphics[width=0.76\linewidth]{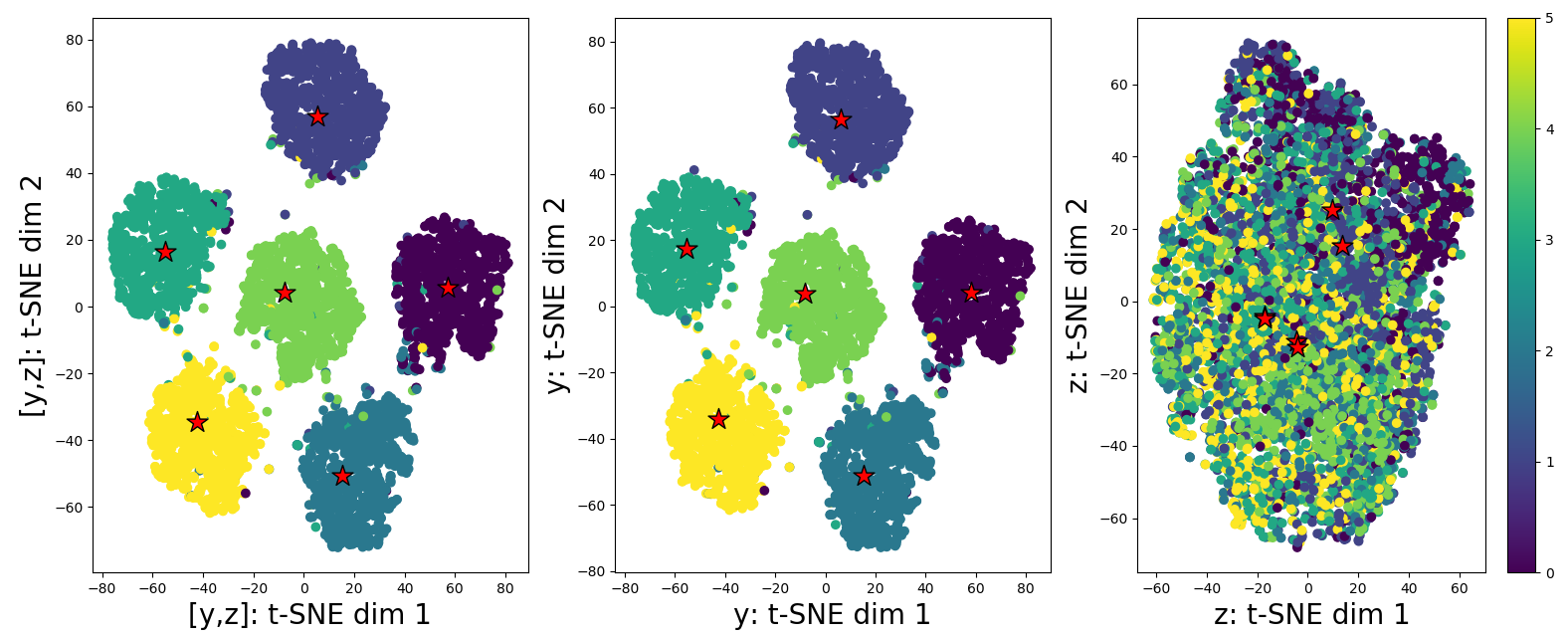} 
}
\caption{t-SNE plot of (left) both components $[y,z]$, (center) only $y$, and (right) only $z$ of CROSR's training latent representations for the first CIFAR-10 experiment. Stars are the respective component's class centroids.}
\label{fig:cifar10-k1-crosr-tsne}
\end{figure} 

In contrast to CROSR, GMVAE's latent representation $\mu(x; \phi_z)$ in Figure \ref{fig:cifar10-k1-gmvae-tsne} separates classes better (in comparison to the right figure in Figure \ref{fig:cifar10-k1-crosr-tsne}). GMVAE's embedding is able to effectively capture both class and reconstruction information simultaneously, leading to more amenable open-set recognition. As CIFAR-10 images are highly hetergeneous within classes, we expect class overlap from reconstruction.

\begin{figure}[h!]
\centerline{
  \includegraphics[width=0.38\linewidth]{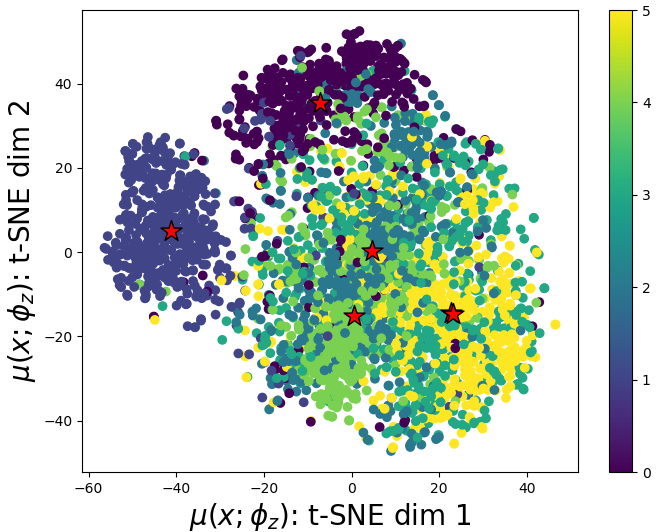} 
}
\caption{t-SNE plot of $\mu(x; \phi_z)$ of GMVAE's training latent representations for the first CIFAR-10 experiment. Stars are the respective component's class centroids.}
\label{fig:cifar10-k1-gmvae-tsne}
\end{figure} 

\subsection{MNIST with ``even'' and ``odd'' classes}
The two known classes are ``even,'' comprised of digits 0 and 2, and ``odd,'' comprised of digits 1 and 3. The six unknown classes are digits 4 and greater. MNIST's standard training set is randomly split into the validation set (4,000 samples of known classes and 6,000 samples of unknown classes) and training set (about 18,000 samples). MNIST's standard testing set (10,000 samples) is kept the same. We use the same CROSR architecture as \cite{crosr} for their MNIST experiment. The GMVAE network architectures are given in Table \ref{tab:mnist-networks}. 

\begin{table}[h!]
\centering
\caption{Network architectures for GMVAE used in the ``even'' and ``odd'' MNIST experiment.}
\label{tab:mnist-networks}
\begin{tabular}{ ccc }   % top level tables, with 3 columns
$\phi_z$ & $\phi_w$  & $\beta$ \\  
% leftmost table of the top level table
\begin{tabular}[t]{ c } 
\hline
Input: $x$ \\
1 conv3-20 \\ 
maxpool-2 \\
1 conv3-50 \\
maxpool-2 \\
FC-500 \\
FC-100 \\
FC-10 ($2\times \text{dim}(z)$) \\
\hline
\end{tabular} &  % starting center sub table
% table 2
\begin{tabular}[t]{ c} 
\hline
Input: $x,y$ \\
1 conv3-10 on $x$ only \\
maxpool-4 \\
Concatenate with $y$ \\
FC-10 ($2\times \text{dim}(w)$) \\
\hline
\end{tabular} & % starting rightmost sub table
% table 3
\begin{tabular}[t]{ c} 
\hline
Input: $w$ \\
FC-20 \\
FC-20 \\
FC-$\left( 2\times \sum_c K_c \times \text{dim}(z) \right)$ \\ 
\hline
\end{tabular} \\
\end{tabular}
\end{table}

This is a clearcut example where each class has two subclusters. To determine that $K=(2,2)$ is indeed the optimal GMVAE selection, we implement the procedure in \S \ref{section:analytical} in Figure \ref{fig:mnist-subcluster}. On the left, the mean difference between the $K=1$ and $K=2$ latent covering loss is 0.86 while the mean difference between $K=2$ and $K=3$ is 0.22. This is indicative of two true subclusters within ``even.'' Similarly on the right, the mean difference between $K=1$ and $K=2$ latent covering loss is 1.23 while the mean  difference between $K=2$ and $K=3$ is -0.09. This is again indicative of two true subclusters within ``odd.'' For these plots, the early epochs are truncated.

\begin{figure}[h!]
\centerline{
  \includegraphics[width=0.38\linewidth]{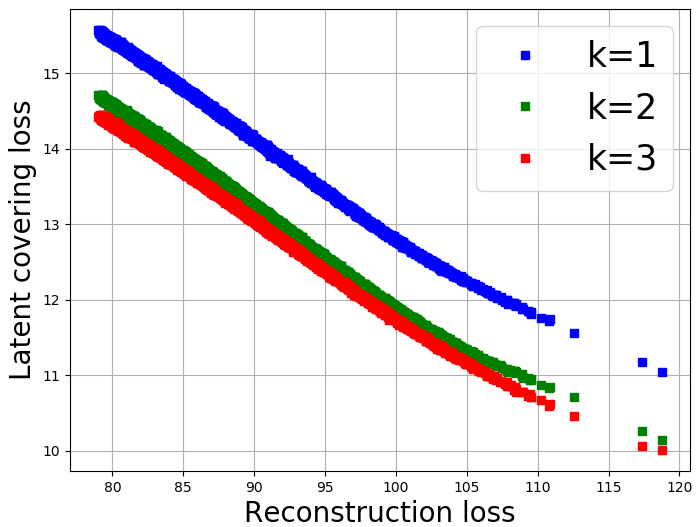}
  \includegraphics[width=0.38\linewidth]{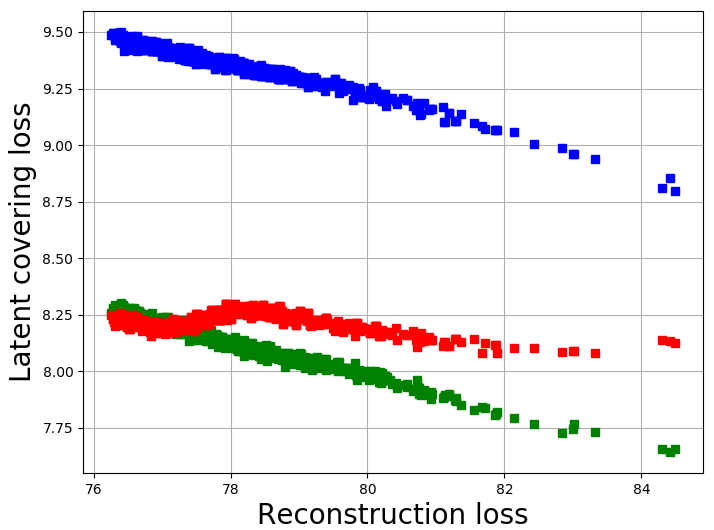} 
}
\caption{The latent covering loss plotted against reconstruction loss for increasing $K$ for the (left) ``even'' and (right) ``odd'' classes of MNIST.}
\label{fig:mnist-subcluster}
\end{figure} 

F1 scores are plotted in Figure \ref{fig:mnist-f1}. Similar to the Fashion MNIST experiment, GMVAE+NC-D begins to outperform CROSR+NC-D when the number of unknown classes $Q \geq 3$. However slightly, CROSR+EVT again performs worst. There is a significant increase in GMVAE open-set accuracy and robustness to increasing $Q$ from utilizing the ``uncertainty'' threshold. This algorithm complements the use of class subclusters as unknown classes' latent representations are strategically more likely embedded in the open space between centroids where $U$ is larger. For $Q \geq 1$ and using the mean thresholds, GMVAE+NC-U F1 scores are on average 19.9\% greater than those of CROSR+NC-D.

\begin{figure}[h!]
\centerline{
  \includegraphics[width=0.38\linewidth]{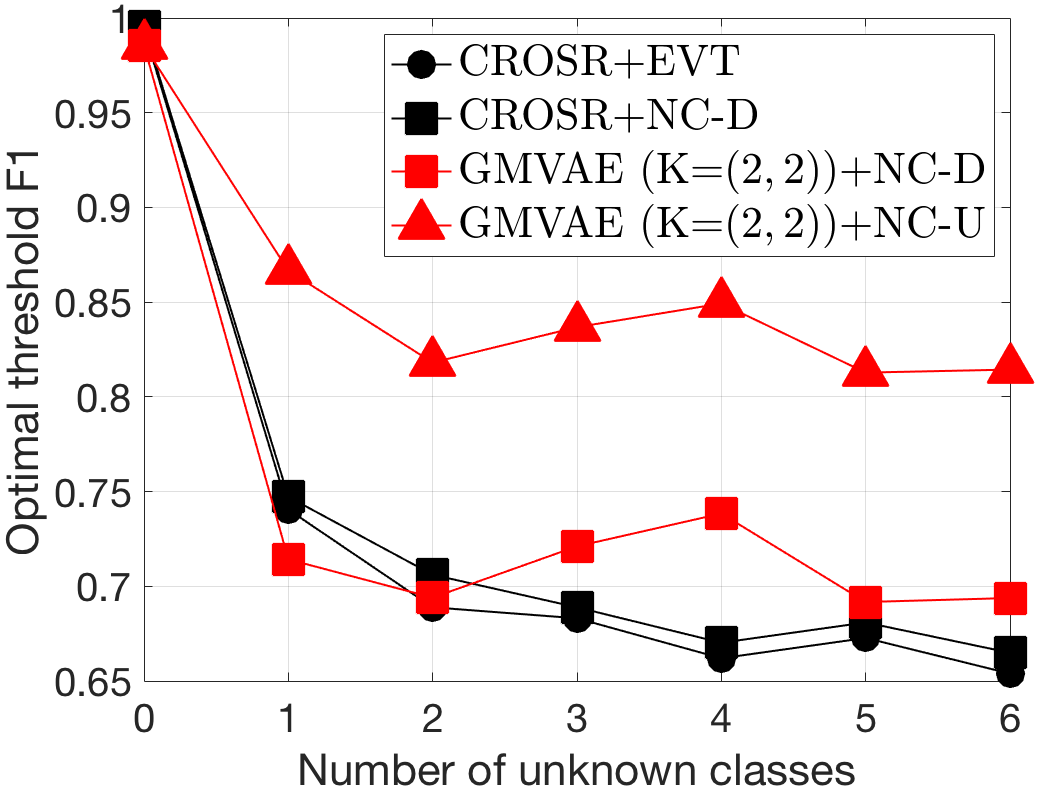} 
  \includegraphics[width=0.38\linewidth]{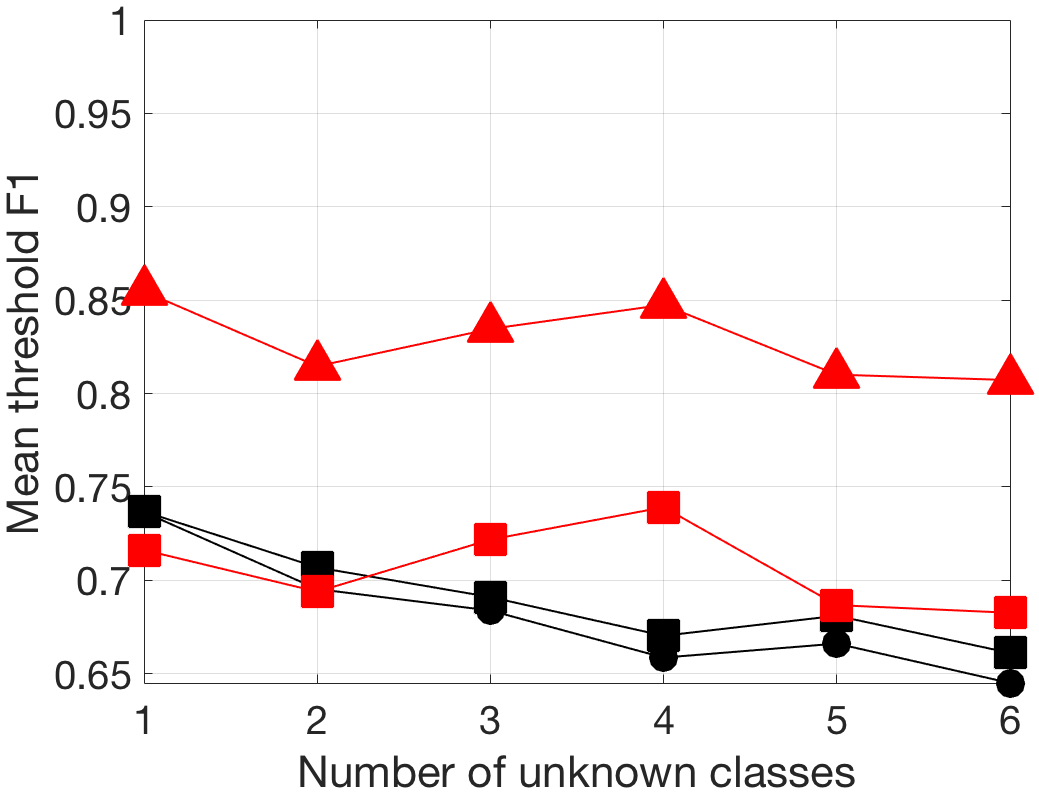}
}
\caption{``Even'' and ``odd'' MNIST open-set test F1 scores.}
\label{fig:mnist-f1}
\end{figure} 

\subsection{CIFAR-10 with ``animals'' and ``vehicles'' classes}
The two known classes are ``animals,'' comprised of cats and dogs, and ``vehicles,'' comprised of cars and trucks. The unknown classes are the other 6 classes. CIFAR-10's standard training set is randomly split into the validation set (4,000 samples of known classes and 6,000 samples of unknown classes) and training set (16,000 samples). CIFAR-10's standard testing set (10,000 samples) is kept the same. The GMVAE network architectures are given in Table \ref{tab:cifar10-k2-networks}. 

\begin{table}[h!]
\centering
\caption{Network architectures for GMVAE used in the second CIFAR-10 experiment with ``animals'' and ``vehicles''.}
\label{tab:cifar10-k2-networks}
\begin{tabular}{ ccc }   % top level tables, with 3 columns
$\phi_z$ & $\phi_w$  & $\beta$ \\  
% leftmost table of the top level table
\begin{tabular}[t]{ c } 
\hline
Input: $x$ \\
2 conv3-64 \\ 
maxpool-2 \\
2 conv3-128 \\
maxpool-2 \\
4 conv3-256 \\
maxpool-2 \\
FC-1000 \\
FC-500 \\
FC-40 ($2\times \text{dim}(z)$) \\
\hline
\end{tabular} &  % starting center sub table
% table 2
\begin{tabular}[t]{ c} 
\hline
Input: $x,y$ \\
1 conv3-10 on $x$ only \\
maxpool-4 \\
Concatenate with $y$ \\
FC-40 ($2\times \text{dim}(w)$) \\
\hline
\end{tabular} & % starting rightmost sub table
% table 3
\begin{tabular}[t]{ c} 
\hline
Input: $w$ \\
FC-50 \\
FC-50 \\
FC-$\left( 2\times \sum_c K_c \times \text{dim}(z) \right)$ \\ 
\hline
\end{tabular} \\
\end{tabular}
\end{table}

This is another clearcut example where each class has two subclusters. We again implement the procedure in \S 4 to determine that $K=(2,2)$ is indeed the optimal GMVAE selection and show the results in Figure \ref{fig:cifar10-k2-subcluster}. On the left, the mean difference between $K=1$ and $K=2$ latent covering loss is 1.31 while the mean difference between $K=2$ and $K=3$ is -1.47. This is indicative of two true subclusters within ``animals.'' On the right, the mean difference between $K=1$ and $K=2$ latent covering loss is 0.82 while the mean difference between $K=2$ and $K=3$ is 0.5. This is moderately indicative of two true subclusters within ``vehicles.''

\begin{figure}[h!]
\centerline{
  \includegraphics[width=0.38\linewidth]{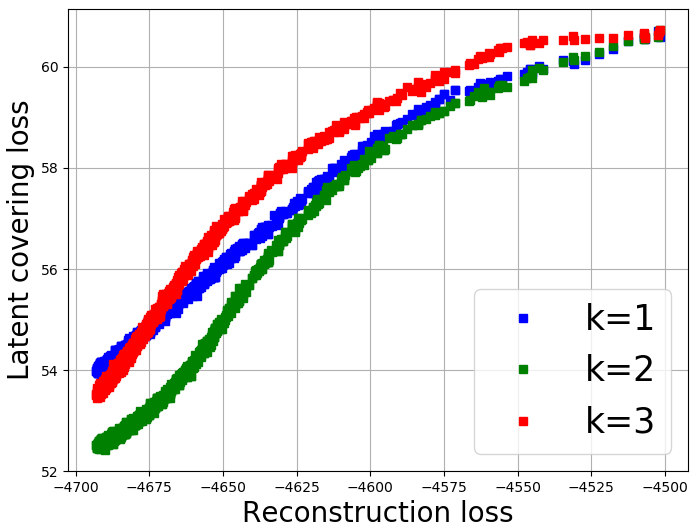} 
  \includegraphics[width=0.38\linewidth]{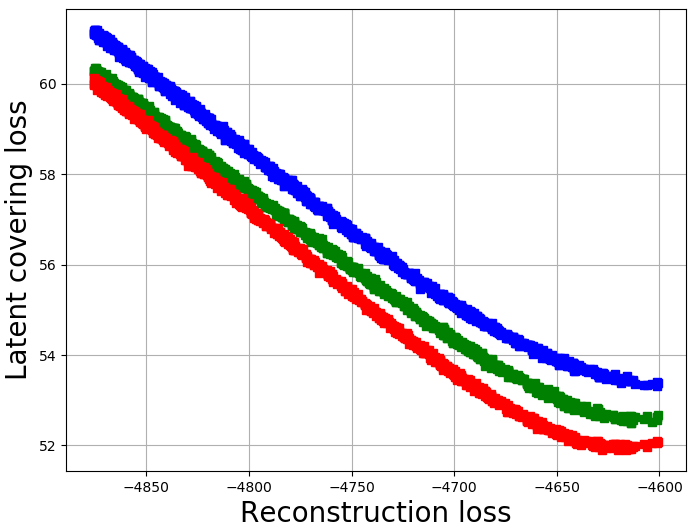} 
}
\caption{The latent covering loss plotted against reconstruction loss for increasing $K$ for the (left) ``animals'' and (right) ``vehicles'' classes of CIFAR-10.}
\label{fig:cifar10-k2-subcluster}
\end{figure} 

F1 scores are plotted in Figure \ref{fig:cifar10-k2-f1}. Discussed in \S \ref{section:algorithms}, as a result of CROSR's softmax classifier, the centroids are not representative and thus its closed-set classification suffers. While CROSR+EVT eventually recovers to near similar performance as GMVAE+NC-D for $Q \geq 4$, the difference is stark for a low number of unknown classes. Again, because of the class subclusters, the ``uncertainty'' threshold provides a significant increase in open-set recognition capability. For $Q \geq 1$ and using the mean thresholds, GMVAE+NC-U F1 scores are on average 33\% greater than those of CROSR+EVT.

\begin{figure}[h!]
\centerline{
  \includegraphics[width=0.38\linewidth]{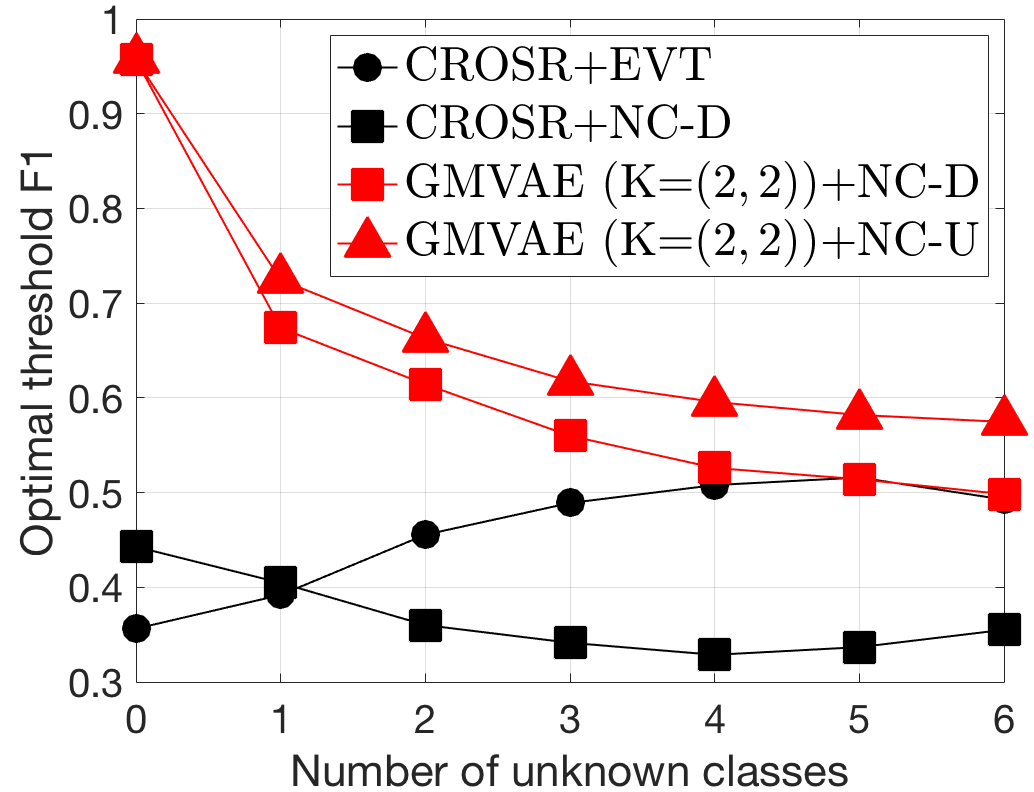} 
  \includegraphics[width=0.38\linewidth]{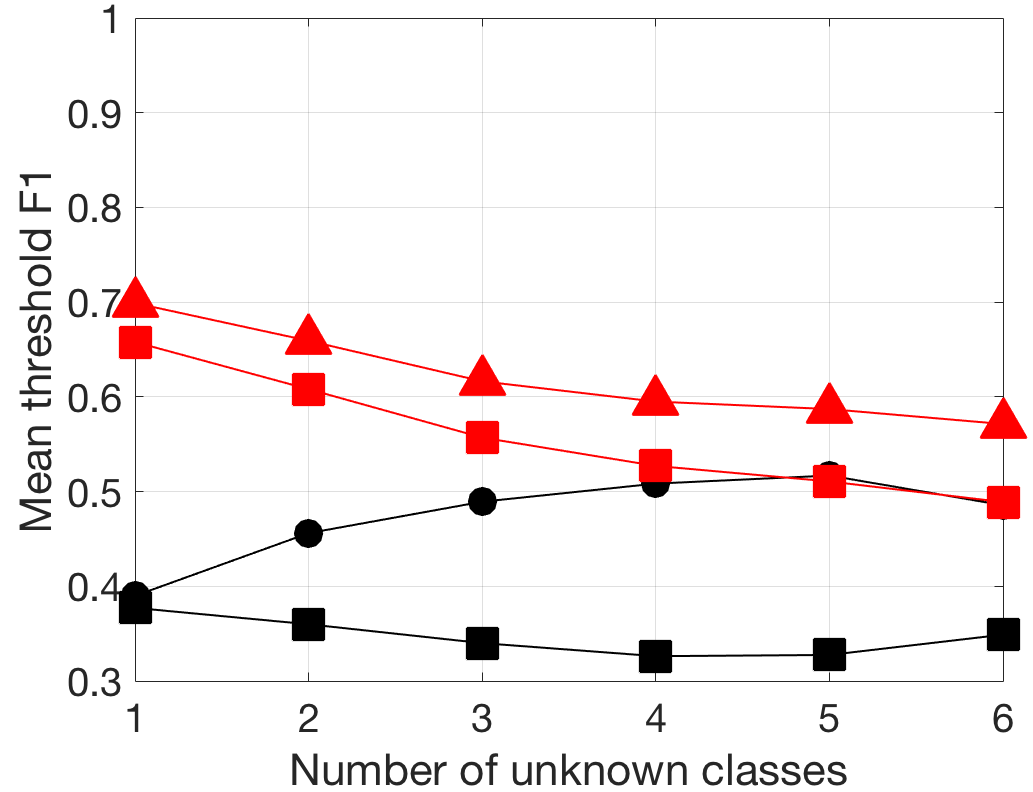}
}
\caption{$K=(2,2)$ CIFAR-10 open-set test F1 scores.}
\label{fig:cifar10-k2-f1}
\end{figure} 

\begin{ack}
The work of the first author is supported by the Predoctoral Training Program in Biomedical Data Driven Discovery (BD3) at Northwestern University (National Library of Medicine Grant 5T32LM012203). The work of the second author is supported in part by NIH Grant R21LM012618.
\end{ack}

\bibliography{open-set}

\end{document}

% --- supplement: open_set-gmvae-supplement-neurips_2020_preprint.tex ---

\maketitle

\section{Neural network assumptions}
We call a neural network $f_\tau$ an $n$-headed neural network if
\begin{enumerate}
\item $f_\tau: \R^m \rightarrow \prod_{i=1}^n \R^s$, i.e. it maps $b$ to $(a_1, a_2, ..., a_n)$ with $a_i \in \R^s$,
\item for each $i$, $1\leq i \leq n$, we have $a_i = f_{\ell_i}^i \circ f_{\ell_i - 1}^i \circ ... \circ f_{t+1}^i \circ f_{t} \circ f_{t-1} \circ  ... \circ f_{1} (b)$ for  an integer $t$ not depending on $i$, $\ell_i \geq t+1$, and each $f_j, f_j^i$ is a typical neural network single layer parameterized by a matrix and a bias vector, and it includes an activation function. Vector $\tau$ corresponds to all these parameters.
\end{enumerate}

In GMVAE, neural networks corresponding to  $q_{\phi_z}, q_{\phi_w}$ are 2-headed neural networks (mean and covariance) with $\phi_z, \phi_w$ denoting all of the respective parameters. Probability $p_\theta$ is a 1 or 2-headed network with parameters $\theta$, and $p_\beta$ for  $\beta  = (\beta_{K_1}, \beta_{K_2}, ..., \beta_{K_C})$ consists of a $\left(2\sum_{c=1}^C K_c\right)$-headed neural network.

\begin{assumption}
In each network $q_{\phi_z}$, $q_{\phi_w}$, $p_\theta$, and $p_\beta$, the last layer in each head $f_{\ell_i}^i$ has an identity activation function.
\end{assumption}
\begin{assumption}
Neural network $p_{\beta'}$ for $\beta'  = (\beta_{K_1},..., \beta_{K_c+1}, ..., \beta_{K_C})$ consists of $p_\beta$ with simply two additional heads, while all other network architectures are the same.
\end{assumption}

\begin{lemma}\label{thm:any-constant}
Under Assumption 1 for an $n$-headed network, we have that given any $\overline{a} = (\overline{a}_1, ...., \overline{a}_n)$, there exists $\tau = \tau (\overline{a})$ such that $f_\tau (b) =  \overline{a}$ for every $b$.
\end{lemma}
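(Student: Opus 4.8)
The plan is to use Assumption 1 directly: because the final layer $f_{\ell_i}^i$ of each head carries the identity activation, it acts as a purely affine map on its input, and I can force the head output to be a prescribed constant by annihilating that layer's weight matrix so that only its bias survives. All upstream parameters then become irrelevant.

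Concretely, I would first fix the arbitrary target $\overline{a} = (\overline{a}_1, \ldots, \overline{a}_n)$ and set every parameter of the shared trunk $f_1, \ldots, f_t$ and of every non-final head layer $f_{t+1}^i, \ldots, f_{\ell_i-1}^i$ to any admissible value (say, all zeros); these will play no role. Denote by $W_{\ell_i}^i$ and $c_{\ell_i}^i$ the weight matrix and bias vector of the last layer of head $i$. Since $a_i \in \R^s$, the map $f_{\ell_i}^i$ has identity activation by Assumption 1 and computes $x \mapsto W_{\ell_i}^i x + c_{\ell_i}^i$, where $x \in \R^{\cdot}$ is the output of $f_{\ell_i-1}^i$ and $W_{\ell_i}^i$ has $s$ rows. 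I would then choose $W_{\ell_i}^i = 0$ (the zero matrix of the appropriate shape) and $c_{\ell_i}^i = \overline{a}_i \in \R^s$. With this choice, for every input $b \in \R^m$ and regardless of the value $x$ produced by the preceding layers, the output of head $i$ equals $0 \cdot x + \overline{a}_i = \overline{a}_i$. Collecting over $i = 1, \ldots, n$ gives $f_\tau(b) = (\overline{a}_1, \ldots, \overline{a}_n) = \overline{a}$ for every $b$, where $\tau = \tau(\overline{a})$ is the assembled vector of all these parameter choices.

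The only point that genuinely requires Assumption 1 — and is essentially the entire content of the lemma — is the identity activation on the last layer: if that activation were instead a nonlinear $\sigma$ with restricted range (e.g.\ a sigmoid), then even a zero weight matrix would yield output $\sigma(\overline{a}_i)$, which cannot be made an arbitrary point of $\R^s$. The identity activation is precisely what allows the bias to realize any target constant, and zeroing the final weight matrix is what decouples the head output from every upstream layer and hence from $b$. Beyond flagging this, the argument is immediate and involves no nontrivial computation, so I do not anticipate any real obstacle.
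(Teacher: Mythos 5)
Your proof is correct and follows exactly the same approach as the paper's: zero out all parameters except the bias of each head's final layer, set that bias to $\overline{a}_i$, and invoke the identity activation of Assumption 1 to conclude the output is constant in $b$. Your additional remarks on why the identity activation is essential (e.g.\ a sigmoid would restrict the range) go slightly beyond the paper's terse proof but do not change the argument.
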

\begin{proof}
Let $\overline{a} $ be given. We define $\tau$ to consist of 0 matrices and biases for each layer except $f_{\ell_i}^i$. In $f_{\ell_i}^i$, the matrix is 0 but the bias is $\overline{a}_i$. Since $f_{\ell_i}^i$ has the identity activation, it follows $f_\tau (b) = \overline{a}$ for every $b$.
\end{proof}

\section{Proof of Proposition 1}
\begin{proposition}
Let us assume that $x\in\mathcal{X}$ is distributed as $x\sim p_\text{data} = \mathcal{B}(\mu_x)$, $C=1$, and Assumption 1 holds. Then the optimal GMVAE loss is constant with respect to $K$. In fact, we have that $\min -\E_\mathcal{X} [\mathcal{L} (K)] =  -\E_\mathcal{X} [\log p_\text{data}]$ ~for every $K\geq1$ and a globally optimal solution reads 
\begin{align}
\left.
\begin{array}{ll}
\mu(x; \phi_z^*) &= \mu_{c=1, k}(w;\beta^*) = \mu_z \\
\sigma^2(x; \phi_z^*) &= \sigma^2_{c=1, k}(w;\beta^*) = \sigma^2_z \\
\mu(x,y; \phi_w^*)& = \vec{0}\\
\sigma^2(x,y; \phi_w^*)& = \vec{1}  \\
\mu(z; \theta^*) &= \mu_x  \\
\end{array}
\right\} \label{eq:trivial-solution}
\end{align}
for any constant vectors $\mu_z, \sigma_z$.
\end{proposition}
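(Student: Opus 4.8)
The plan is to prove the statement in two parts: first establish that $-\E_\mathcal{X}[\mathcal{L}(K)]$ is bounded below by $-\E_\mathcal{X}[\log p_\text{data}]$ for every $K \geq 1$, and then exhibit the parameter setting in \eqref{eq:trivial-solution} that attains this bound, so that the minimum is both constant in $K$ and equal to the stated value.

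For the lower bound, I would use the fact that $\mathcal{L}(K)$ is an evidence lower bound, so $\mathcal{L}(K) \leq \log p(x)$, where $p(x) = \int p_\theta(x \mid z)\,p(z)\,dz$ is the model's marginal likelihood and the gap is the KL divergence between the variational posterior and the true posterior. Applying $-\E_\mathcal{X}$ reverses the inequality: $-\E_\mathcal{X}[\mathcal{L}(K)] \geq -\E_\mathcal{X}[\log p(x)]$. Gibbs' inequality, i.e. nonnegativity of the KL divergence of $p_\text{data}$ from any model density $p$, gives $\E_\mathcal{X}[\log p_\text{data}] \geq \E_\mathcal{X}[\log p(x)]$, hence $-\E_\mathcal{X}[\log p(x)] \geq -\E_\mathcal{X}[\log p_\text{data}]$. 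Chaining the two inequalities yields $-\E_\mathcal{X}[\mathcal{L}(K)] \geq -\E_\mathcal{X}[\log p_\text{data}]$ for all $K$.

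For achievability, I would substitute \eqref{eq:trivial-solution} and decompose $\mathcal{L}(K)$ into its reconstruction term and its KL terms. The decoder is constant, $\mu(z;\theta^*) = \mu_x$, so $p_\theta(x \mid z) = \mathcal{B}(\mu_x) = p_\text{data}(x)$ for every $z$; the reconstruction term therefore equals $\log p_\text{data}(x)$ independent of the sampled $z$, and its expectation is $\E_\mathcal{X}[\log p_\text{data}]$, which also forces $p(x)=p_\text{data}(x)$ so that both inequalities above become equalities. Each KL term then vanishes: since $q_{\phi_z}(z \mid x) = \mathcal{N}(\mu_z, \sigma_z^2)$ matches $p_\beta(z \mid w, y)$ for every cluster $k$ and every $w$, the $z$-KL is zero regardless of the distributions over $y$ and $w$; since $q_{\phi_w}(w \mid x, y) = \mathcal{N}(\vec 0, \vec 1) = p(w)$, the $w$-KL is zero; and because all cluster parameters coincide, the posterior responsibilities $q(y \mid x)$ collapse onto the uniform prior $p(y)$, so the categorical KL vanishes as well. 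All the constant-output maps required here (constant encoder mean and variance, constant cluster mean and variance, constant decoder) are realizable under Assumption 1 by Lemma~\ref{thm:any-constant}. Hence $\mathcal{L}(K) = \log p_\text{data}(x)$ pointwise, so $-\E_\mathcal{X}[\mathcal{L}(K)] = -\E_\mathcal{X}[\log p_\text{data}]$, matching the lower bound for every $K$.

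I expect the main obstacle to be the bookkeeping of the KL terms, in particular confirming that the expectation over the categorical variable $y$ (and over $w$) reintroduces no $K$-dependence. The key observation that resolves this is that making all cluster parameters identical forces every summand of the $z$-KL to be zero simultaneously and sends the responsibilities $q(y \mid x)$ to $p(y)$, so the mixture structure contributes nothing no matter how large $K$ is; carefully verifying this cluster-collapse, and checking that it is consistent with the precise form the $y$-prior and its KL take in the ELBO, is where the care is needed.
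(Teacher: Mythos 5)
Your proposal is correct and follows essentially the same route as the paper: both bound $-\E_\mathcal{X}[\mathcal{L}(K)]$ below by $-\E_\mathcal{X}[\log p_\text{data}]$ using the same two KL non-negativity facts (the variational gap and $KL(p_\text{data} \| p_{\theta,\beta}(x|y=1))$), and both verify the bound is attained at \eqref{eq:trivial-solution} using the same collapse facts ($p_{\theta^*}(x|z)=p_\text{data}$ for all $z$, the encoder matching every subcluster conditional prior, $q_{\phi_w^*}=p(w)$, and the responsibilities over the categorical latent equaling its prior). The only cosmetic difference is that you evaluate the ELBO's reconstruction and KL terms directly, while the paper equivalently computes the true posterior $p_{\theta^*,\beta^*}(z,w,v|x,y=1)$ and shows the variational gap is zero term by term.
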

\begin{proof}
Note that $(\phi_z^*, \phi_w^*, \beta^*, \theta^*)$ exist due to Assumption 1 and Lemma \ref{thm:any-constant}. First, we show that $(\theta^*, \beta^*)$ given in \eqref{eq:trivial-solution} maximize the log likelihood $\E_\mathcal{X} \left[ \log p_{\theta, \beta} (x|y=1) \right]$ and results in $p_{\theta^*, \beta^*} (x|y=1) = p_\text{data}$. We have 
\begin{align}
KL(p_\text{data} || p_{\theta, \beta} (x|y=1)) = \E_\mathcal{X} \left[ \log p_\text{data} \right]-\E_\mathcal{X} \left[ \log p_{\theta, \beta} (x|y=1) \right]
\end{align}
and thus maximizing $\E_\mathcal{X} \left[ \log p_{\theta, \beta} (x|y=1) \right]$ is equivalent to minimizing $KL(p_\text{data} || p_{\theta, \beta} (x|y=1))$. The global minimum of $KL(p_\text{data} || p_{\theta, \beta} (x|y=1))$ is clearly when $p_\text{data} = p_{\theta, \beta} (x|y=1)$. This is indeed the case for $(\theta^*, \beta^*)$, since
\begin{align}
p_{\theta^*, \beta^*} (x|y=1) &= \int_{w,z,v}  p_{\beta^*, \theta^*} (x,v,w,z|y=1) dw dz dv \\
&= \int_{w,z,v}  p_{\theta^*} (x|z) p_{\beta^*} (z|w,y=1,v) p(v|y=1) p(w) dw dz dv \\
&= \int_{w,z,v}  p_\text{data}  p_{\beta^*} (z|w,y=1,v) p(v|y=1) p(w) dw dz dv \\
&= p_\text{data} \label{eq:generative-equals-data}
\end{align}
because of GMVAE's generative model factorization and \eqref{eq:trivial-solution}. Now we have 
\begin{align}
\E_\mathcal{X} \left[ \log p_\text{data}  \right] &= \E_\mathcal{X} \left[ \log p_{\theta^*, \beta^*} (x|y=1) \right] \\
& = \E_\mathcal{X} \left[  \E_{q_{\phi^*}(v,w,z | x,y=1)} \left[ \log  \frac{p_{\theta^*, \beta^*}  (x,z,w,v | y=1)}{q_{\phi^*}(v,w,z | x,y=1)}  \right] \right]  \\
& +\E_\mathcal{X} \left[\E_{q_{\phi^*}(v,w,z | x,y=1)} \left[  \log \frac{q_{\phi^*}(v,w,z | x,y=1)}{p_{\theta^*, \beta^*}  (z,w,v | x,y=1)}  \right] \right] \label{eq:VG}\\
&=\E_\mathcal{X} [ \mathcal{L} (K;\phi_z^*, \phi_w^*, \beta^*, \theta^*)] + \E_\mathcal{X} [\text{VG}(\phi_z^*, \phi_w^*, \beta^*, \theta^*)]\label{eq:l-vg}
\end{align}
where $\text{VG}(\phi_z^*, \phi_w^*, \beta^*, \theta^*)$ corresponds to \eqref{eq:VG}. We next show that $\text{VG}(\phi_z^*, \phi_w^*, \beta^*, \theta^*)=0$. This together with the facts that maximized $\E_\mathcal{X} [\mathcal{L} (K;\phi_z, \phi_w, \beta, \theta)]$ corresponds with minimized $\E_\mathcal{X} [\text{VG}(\phi_z, \phi_w, \beta, \theta)]$, and $\text{VG} (\phi_z, \phi_w, \beta, \theta) \geq 0$ (it is a KL divergence), shows optimality.

From \eqref{eq:trivial-solution} we have that $p_{\theta^*} (x|z) = p_\text{data} (x)$ for all $x$ and $z$ and thus with \eqref{eq:generative-equals-data} we have
\begin{align}
p_{\theta^*, \beta^*} (z,w,v|x,y=1) &= \frac{p_{\theta^*} (x|z, w, v, y=1) p_{\beta^*} (z, w, v| y=1)}{p_{\theta^*, \beta^*} (x|y=1)} \\
&= \frac{p_{\theta^*} (x|z) p_{\beta^*} (z, w, v| y=1)}{p_\text{data} (x)}  \\
&=p_{\beta^*} (z, w, v| y=1). \label{eq:independent-x}
\end{align}
The reconstruction term $p_\theta (x|z, w, v, y=1)  = p_\theta (x|z)$ for every $\theta$ because in GMVAE, data reconstruction depends only on $z$ and is independent of $w$ and $v$ (see \S 3.1 of the paper).

Also from Bayes' and GMVAE's generative model factorization, we have the following simplification
\begin{align}
p_{\beta^*} ( v|z, w, y=1)  &=  \frac{p_{\beta^*} ( z|w,y=1,v)  p(v |y=1) p(w)}{p_{\beta^*}(z,w|y=1)} \\
&=  \frac{p_{\beta^*} ( z|w,y=1,v)  p(v |y=1) p(w)}{p_{\beta^*}(z|w,y=1) p(w|y=1)} \\
&=  \frac{p_{\beta^*} ( z|w,y=1,v)  p(v | y=1)}{\sum_{v'} p_{\beta^*}(z|w,y=1,v') p(v'|y=1) } \label{eq:v-factor}   \\
&= p (v|y=1) \label{eq:v-post-prior}
\end{align}
where \eqref{eq:trivial-solution} is only used in the last line. Substituting \eqref{eq:independent-x} into $\text{VG}(\phi_z^*, \phi_w^*, \beta^*, \theta^*)$ we obtain
\begin{align}
& \text{VG}(\phi_z^*, \phi_w^*, \beta^*, \theta^*)\\
&=\E_{q_{\phi^*}(v,w,z | x,y=1)} \left[\log \frac{q_{\phi^*}(v,w,z | x,y=1)}{p_{\theta^*, \beta^*}  (z,w,v | x,y=1)}  \right]  \\
&=  \E_{q_{\phi^*}(v,w,z |x, y=1)} \left[\log \frac{q_{\phi^*}(v,w,z | x,y=1)}{p_{\beta^*} (z, w, v| y=1)}  \right]   \\
&=  \E_{p_{\beta^*} ( v|z, w, y=1) q_{\phi_w^*}(w|x,y=1) q_{\phi_z^*}(z|x)} \left[ \log \frac{p_{\beta^*} ( v|z, w, y=1) q_{\phi_w^*}(w|x,y=1) q_{\phi_z^*}(z|x)}{p_{\beta^*} ( z|w, y=1, v) p(w) p (v|y=1) }  \right]   \\
 &=  \mathbb{E}_{q_{\phi_w^*}(w|x,y=1) q_{\phi_z^*}(z|x)  } \left[ \log q_{\phi_z^*}(z|x)  - \sum_{j=1}^K p_{\beta^*} ( v=j|z, w, y=1)  \log p_{\beta^*} ( z|w, y=1, v=j)  \right] \\
&+ KL(q_{\phi_w^*}(w|x,y=1) || p(w) ) \\
&+ \mathbb{E}_{q_{\phi_w^*}(w|x,y=1) q_{\phi_z^*}(z|x)} \left[ KL( p_{\beta^*} ( v|z, w, y=1)  || p (v|y=1) ) \right]  \\
&= 0
\end{align}
due to \eqref{eq:trivial-solution} and \eqref{eq:v-post-prior}. To complete the proof, simply note that negating \eqref{eq:l-vg} yields \\
$-\E_\mathcal{X} [ \mathcal{L} (K;\phi_z^*, \phi_w^*, \beta^*, \theta^*)]=-\E_\mathcal{X} \left[\log p_\text{data} \right] $.
\end{proof}

\section{Proof of Proposition 2}
\begin{lemma} \label{thm:delta-exist}
For every $\delta >0$ and $\mu$, there exists $\sigma^2$ such that if $f (z)$ is the pdf of a $d$-dimensional Normal random vector with mean $\mu$ and diagonal covariance $\sigma^2$ then
\begin{align}
f (z) &\leq \delta \quad \text{for every $z$.}
\end{align}
\end{lemma}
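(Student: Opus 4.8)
The plan is to invoke the closed form of the diagonal Gaussian density and bound it by its value at the mean. Writing $\sigma^2 = (\sigma_1^2, \dots, \sigma_d^2)$ for the diagonal entries of the covariance, the density factorizes as
\begin{align}
f(z) = \prod_{i=1}^d \frac{1}{\sqrt{2\pi \sigma_i^2}} \exp\!\left( -\frac{(z_i - \mu_i)^2}{2\sigma_i^2} \right).
\end{align}
First I would observe that each exponential factor lies in $(0,1]$, attaining the value $1$ exactly when $z_i = \mu_i$. Hence $f$ attains its global maximum over $z \in \R^d$ at $z = \mu$, where every exponential equals $1$, and that maximum is just the product of the normalizing constants,
\begin{align}
\max_{z} f(z) = f(\mu) = \frac{1}{(2\pi)^{d/2} \prod_{i=1}^d \sigma_i}.
\end{align}

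Next I would choose the variances large enough to drive this maximum below $\delta$. Taking all coordinates equal, say $\sigma_i = \sigma$, it suffices to require $\sigma^d \geq 1/\big((2\pi)^{d/2}\delta\big)$, i.e.
\begin{align}
\sigma \geq \frac{1}{\sqrt{2\pi}\,\delta^{1/d}}.
\end{align}
For any such choice $\sigma^2 = (\sigma^2, \dots, \sigma^2)$ we then have $f(z) \leq f(\mu) \leq \delta$ for every $z$, which is exactly the claim. Crucially, the bound is uniform in $z$ because the maximizer $z = \mu$ does not depend on the chosen variance, so inflating $\sigma$ lowers the peak without shifting where it occurs.

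I do not expect a genuine obstacle here: the content is elementary. The one point deserving care is that the requested inequality must hold \emph{uniformly} over all $z$, not merely at a single point, and this is secured the moment the maximizer is pinned down at $z=\mu$; everything else is the routine algebra of solving the resulting inequality for $\sigma$. Since the lemma asks only for existence of \emph{some} admissible $\sigma^2$ and not uniqueness, exhibiting the isotropic choice above completes the argument.
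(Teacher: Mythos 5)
Your proof is correct and matches the paper's argument essentially verbatim: both bound each exponential factor by $1$ (equivalently, evaluate the peak at $z=\mu$) and then choose an isotropic $\sigma$ with $\sigma \geq \frac{1}{\sqrt{2\pi}\,\delta^{1/d}}$, which is exactly the paper's $u = \left(\frac{1}{\delta}(2\pi)^{-d/2}\right)^{1/d}$. No issues.
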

\begin{proof}
Let $u = \left( \frac{1}{\delta} (2\pi)^{-d/2}  \right)^{1/d}$ and $ \sigma = \left( u, ..., u \right)$. We have \phantom{\qedhere}
\begin{equation}
f (z) =\prod_i \frac{1}{\sigma_i\sqrt{2 \pi}} \exp\left\{ -\frac{1}{2 \sigma_i^2} (z_i-\mu_i)^2  \right\} \leq \prod_i \frac{1}{\sigma_i\sqrt{2 \pi} } = \delta.   \tag*{\qed} 
\end{equation}
\end{proof}
\begin{proposition}
Let us assume $C=1$, Assumptions 1 and 2 hold, and that $p(v|y=1)$ is uniform in the appropriate dimension. We have 
\begin{align}
\min\left\{ -\E_\mathcal{X} [\mathcal{L}(K; \phi_z, \phi_w, \beta, \theta)] \right\} - \min \left\{ -\E_\mathcal{X}[\mathcal{L}(K+1; \phi_z, \phi_w, \beta, \theta)]\right\} \geq \epsilon_K
\end{align}
where $-\log 2 \leq \log(K/(K+1)) \leq \epsilon_K$ for all $K$.
\end{proposition}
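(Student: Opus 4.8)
The plan is to rewrite the claimed difference as a comparison of best-achievable ELBOs and then lower-bound the $(K+1)$-optimum by extending a $K$-optimal solution with a single extra mixture component. Setting $L^*(K) = \max_{\phi_z,\phi_w,\beta,\theta}\E_\mathcal{X}[\mathcal{L}(K;\phi_z,\phi_w,\beta,\theta)]$, the left-hand side equals $L^*(K+1)-L^*(K)$, so it suffices to prove $L^*(K+1)\geq L^*(K)+\log(K/(K+1))$. I would establish this by showing that for any $K$-component solution there is a feasible $(K+1)$-component solution whose expected ELBO is smaller by at most $\log((K+1)/K)$; taking the supremum over $K$-solutions then yields the bound and sidesteps whether the optima are attained.

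First I would isolate how $\mathcal{L}$ depends on $K$. As in the proof of Proposition 1, the posterior over the mixture index is the exact conditional $q(v|z,w,y=1)=p_\beta(v|z,w,y=1)$, so the entire $v$-dependence collapses into the log of the uniform-weight marginal $p_\beta(z|w,y=1)=\frac{1}{K}\sum_{k=1}^K p_\beta(z|w,y=1,v=k)$:
\[
\E_\mathcal{X}[\mathcal{L}(K)] = \E_\mathcal{X}\,\E_{q_{\phi_z}(z|x)\,q_{\phi_w}(w|x,y=1)}\!\left[\log p_\beta(z|w,y=1)\right] + R,
\]
where $R$ collects the reconstruction term $\E[\log p_\theta(x|z)]$, the entropy $-\E[\log q_{\phi_z}(z|x)]$, and $-KL(q_{\phi_w}(w|x,y=1)\,\|\,p(w))$, none of which involve $\beta$ or $K$. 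Only the marginal term sees the mixture size.

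Next I would build the extension. Fix a near-optimal $K$-solution $(\phi_z^*,\phi_w^*,\beta^*,\theta^*)$. By Assumption 2 the $(K+1)$-component decoder $p_{\beta'}$ is $p_{\beta^*}$ with two additional heads; I keep $\phi_z^*,\phi_w^*,\theta^*$ fixed and let $\beta'$ reproduce the first $K$ components of $\beta^*$ exactly, with the new head set (via Lemma~\ref{thm:any-constant}) to output a fixed valid Gaussian. Since the first $K$ component densities are unchanged and densities are nonnegative, the uniform prior forces the pointwise bound
\begin{align}
p_{\beta'}(z|w,y=1) &= \frac{1}{K+1}\sum_{k=1}^{K+1} p_{\beta'}(z|w,y=1,v=k) \\
&\geq \frac{1}{K+1}\sum_{k=1}^{K} p_{\beta^*}(z|w,y=1,v=k) = \frac{K}{K+1}\,p_{\beta^*}(z|w,y=1),
\end{align}
hence $\log p_{\beta'}(z|w,y=1)\geq \log(K/(K+1))+\log p_{\beta^*}(z|w,y=1)$ for all $z,w$. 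Because $R$ is untouched, integrating this inequality against $q_{\phi_z^*}q_{\phi_w^*}$ and $\E_\mathcal{X}$ gives $\E_\mathcal{X}[\mathcal{L}(K+1;\phi_z^*,\phi_w^*,\beta',\theta^*)]\geq \E_\mathcal{X}[\mathcal{L}(K;\phi_z^*,\phi_w^*,\beta^*,\theta^*)]+\log(K/(K+1))$, so $L^*(K+1)\geq L^*(K)+\log(K/(K+1))$ and the difference is at least $\epsilon_K:=\log(K/(K+1))$.

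To finish, I would note that $K/(K+1)$ increases in $K\geq 1$ with minimum $1/2$ at $K=1$, so $\log(K/(K+1))\geq -\log 2$, completing the chain $-\log 2\leq \log(K/(K+1))\leq \epsilon_K$. The hard part is the isolation step: I must verify that under the exact-posterior form the only $K$-sensitive term is $\E[\log p_\beta(z|w,y=1)]$ and that appending a head leaves reconstruction and the $w$-divergence genuinely fixed, so that no residual $v$-entropy term reintroduces uncontrolled $K$-dependence. Once this holds, the uniform prior produces the clean factor $K/(K+1)$ and the remainder is the one-line pointwise domination above.
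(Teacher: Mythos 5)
Your proof is correct, but it takes a genuinely different and cleaner route than the paper's. Both arguments share the same skeleton --- extend a $K$-optimal solution to $K+1$ components while keeping $\phi_z,\phi_w,\theta$ and the first $K$ heads fixed (valid by Assumption 2 and Lemma \ref{thm:any-constant}) --- but they diverge in how they control the cost of the extra component. The paper makes the new component a near-flat Gaussian with density at most $\delta$ (Lemma \ref{thm:delta-exist}) and then runs a term-by-term perturbation analysis of the expanded ELBO: the $v$-posterior weights shift by $O(\delta)$, the cross-entropy and KL-to-prior terms shift by $O(\delta\log\delta)$, and the bound $\epsilon_K \geq \log(K/(K+1)) + o(1)$ emerges only in the limit $\delta \to 0$. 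You instead collapse the $v$-dependence first: since the loss uses the exact posterior $p_\beta(v|z,w,y=1)$, Bayes' rule gives
\begin{align}
\sum_{j} p_{\beta} ( v=j|z, w, y=1) \log p_{\beta} ( z|w, y=1, v=j) - KL\left(p_\beta(v|z,w,y=1)\,\|\,p(v|y=1)\right) = \log p_\beta(z|w,y=1),
\end{align}
the log of the uniform mixture $\frac{1}{K}\sum_j p_\beta(z|w,y=1,v=j)$; this identity can be checked directly against the paper's own expansion of the loss in the proofs of Propositions 1 and 2, which settles the ``isolation'' step you flagged as the hard part --- the remaining terms (reconstruction, $z$-entropy, $w$-KL) indeed involve neither $\beta$ nor $K$. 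After this collapse, nonnegativity of the new component's density gives the pointwise bound $p_{\beta'}(z|w,y=1)\geq \frac{K}{K+1}\,p_{\beta^*}(z|w,y=1)$ for an \emph{arbitrary} new head, and $\epsilon_K = \log(K/(K+1))$ follows exactly: no Lemma \ref{thm:delta-exist}, no $\delta$, no $o(1)$ bookkeeping. Your route also handles possible non-attainment of the minima via suprema, which the paper implicitly assumes away. What the paper's heavier machinery buys is independence from the collapse identity: the perturbation argument would survive in a variant where $q(v|z,w,y=1)$ is a separately parameterized variational network rather than the exact posterior, in which case your one-line collapse is unavailable.
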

\begin{proof}
We show that for every solution $(\phi_z', \phi_w', \beta', \theta')$ to $\min \E_\mathcal{X} [ - \mathcal{L} (K;\phi_z, \phi_w, \beta, \theta)]$, there exists a corresponding solution $(\phi_z^*, \phi_w^*, \beta^*, \theta^*)$ such that 
\begin{align}
-\E_\mathcal{X} [ \mathcal{L} (K;\phi_z', \phi_w', \beta', \theta')]  = -\E_\mathcal{X} [ \mathcal{L} (K+1;\phi_z^*, \phi_w^*, \beta^*, \theta^*) ] + \epsilon_K.
\end{align}
Let us assume that $(\phi_z', \phi_w', \beta', \theta')$ minimizes $ -\E_\mathcal{X} [ \mathcal{L} (K;\phi_z, \phi_w, \beta, \theta)] $. Then we can choose 
 \begin{align}
\left.
\begin{array}{ll}
\phi_z^* &= \phi_z' \\
\phi_w^* &= \phi_w' \\
\theta^* &= \theta'
\end{array}
\right. \label{eq:same}
\end{align}
which is a valid choice by Assumption 2, and have $\beta^*$ such that
 \begin{align}
p_{\beta^*} ( z|w, y=1, v) &=  p_{\beta'} ( z|w, y=1, v)  \quad \text{for all $v\leq K$} \label{eq:first-K-same} \\
p_{\beta^*} ( z|w, y=1, v=K+1) &\leq \delta \quad \text{ for every }  z,w \label{eq:K+1-far-away}
\end{align}
for any fixed $0<\delta < 1/e $. Conditions \eqref{eq:first-K-same} and \eqref{eq:K+1-far-away} are always possible due to Assumptions 1 and 2 and Lemmas \ref{thm:any-constant} and \ref{thm:delta-exist}. In essence, we choose $\beta^*$ such that the first $K$ subcluster generative distributions are the same as the case $\beta'$ but we take the $(K+1)$-th subcluster generative distribution to map all points $w$ to the same Normal distribution with large enough covariance. 

Inserting \eqref{eq:first-K-same} and \eqref{eq:K+1-far-away} into \eqref{eq:v-factor} and combined with uniform priors, we get that
\begin{equation}
p_{\beta^*} ( v=K+1|z, w, y=1) = \frac{p_{\beta^*} ( z|w, y=1, v=K+1) }{\sum_{j=1}^K p_{\beta'} ( z|w, y=1, v=j) +  p_{\beta^*} ( z|w, y=1, v=K+1)} \label{eq:v-K+1-rewrite}
\end{equation}
and
\begin{align}
p_{\beta^*} ( v=k|z, w, y=1)&= \frac{p_{\beta'} ( z|w, y=1, v=k)}{\sum_{j=1}^K p_{\beta'} ( z|w, y=1, v=j) +  p_{\beta^*} ( z|w, y=1, v=K+1) } \\
&\leq  \frac{p_{\beta'} ( z|w, y=1, v=k)}{\sum_{j=1}^K p_{\beta'} ( z|w, y=1, v=j)}  
= p_{\beta'} ( v=k|z, w, y=1) \label{eq:v-post-inq}
 \end{align}
 for all $k \leq K$. The absolute difference between the two posteriors for $k \leq K$ in \eqref{eq:v-post-inq} is bounded by a factor of $\delta$ as follows:
\begin{align}
&\bigg| p_{\beta^*} ( v=k|z, w, y=1) - p_{\beta'} ( v=k|z, w, y=1)\bigg|   \\
&=  \bigg| \frac{p_{\beta'} ( z|w, y=1, v=k)}{\sum_{j=1}^K p_{\beta'} ( z|w, y=1, v=j) +  p_{\beta^*} ( z|w, y=1, v=K+1) } \\
& \qquad \qquad   \qquad \qquad   \qquad \qquad   \qquad \qquad  -  \frac{p_{\beta'} ( z|w, y=1, v=k)}{\sum_{j=1}^K p_{\beta'} ( z|w, y=1, v=j)  }\bigg|  \\
&= \frac{p_{\beta^*} ( z|w, y=1, v=K+1) p_{\beta'} ( z|w, y=1, v=k) }{\left(\sum_{j=1}^K p_{\beta'} ( z|w, y=1, v=j) +  p_{\beta^*} ( z|w, y=1, v=K+1) \right) } \\
&\qquad \qquad \qquad \qquad\qquad \qquad\qquad \qquad  \times \frac{1}{\sum_{j=1}^K p_{\beta'} ( z|w, y=1, v=j)}  \\
& \leq \delta   \frac{p_{\beta'} ( z|w, y=1, v=k) }{\left(\sum_{j=1}^K p_{\beta'} ( z|w, y=1, v=j)  \right)^2}  \\
&= \delta  A(z, w, v=k) \label{eq:bound-diff}\>.
\end{align}

Now we calculate $\epsilon_K$ given by
\begin{align}
\E_\mathcal{X} [ - \mathcal{L} (K;\phi_z', \phi_w', \beta', \theta')] -  \E_\mathcal{X} [- \mathcal{L} (K+1;\phi_z^*, \phi_w^*, \beta^*, \theta^*) ]  = \epsilon_K
 \end{align}
 Because of \eqref{eq:same}, $\epsilon_K$ simplifies to
 \begin{align}
 & \epsilon_K = \\
&-\E_\mathcal{X} \left[  \mathbb{E}_{q_{\phi_w^*}(w|x,y=1) q_{\phi_z^*}(z|x)  } \left[    \sum_{j=1}^K p_{\beta'} ( v=j|z, w, y=1)  \log p_{\beta'} ( z|w, y=1, v=j)  \right] \right]  \\ 
 &+\E_\mathcal{X} \left[  \mathbb{E}_{q_{\phi_w^*}(w|x,y=1) q_{\phi_z^*}(z|x)  } \left[\sum_{j=1}^{K+1} p_{\beta^*} ( v=j|z, w, y=1)  \log p_{\beta^*} ( z|w, y=1, v=j)  \right] \right] \\
&+\E_\mathcal{X} [\mathbb{E}_{q_{\phi_w^*}(w|x,y=1) q_{\phi_z^*}(z|x)} \left[ KL( p_{\beta'} ( v|z, w, y=1)  || p_{K} (v|y=1) ) \right] ] \\
& -\E_\mathcal{X} [\mathbb{E}_{q_{\phi_w^*}(w|x,y=1) q_{\phi_z^*}(z|x)} \left[ KL( p_{\beta^*} ( v|z, w, y=1)  || p_{K+1} (v|y=1) ) \right] ] \\
& = \epsilon_K^{(1)} + \epsilon_K^{(2)}
 \end{align}
where $p_K (v|y=1)$ indicates that $v$ is $K$-dimensional, and $\epsilon_K^{(1)}$ are the first two terms while $\epsilon_K^{(2)}$ are the the last two terms.

We first analyze $\epsilon_K^{(1)}$. For brevity, we combine the expectations and simply write $\E[\cdot]$. Together with \eqref{eq:first-K-same}, \eqref{eq:v-K+1-rewrite}, and \eqref{eq:bound-diff}, we get
\begin{align}
\bigg|  \epsilon_K^{(1)}  \bigg| &= \bigg|  - \E \left[ \sum_{j=1}^K p_{\beta'} ( v=j|z, w, y=1)  \log p_{\beta'} ( z|w, y=1, v=j)  \right] \\ 
&+\E \left[ \sum_{j=1}^{K} p_{\beta^*} ( v=j|z, w, y=1)  \log p_{\beta'} ( z|w, y=1, v=j) \right] \\ 
&+\E \left[  p_{\beta^*} ( v=K+1|z, w, y=1)  \log p_{\beta^*} ( z|w, y=1, v=K+1) \right]  \bigg| \\
&=   \bigg|  \E \left[  \sum_{j=1}^K  \log p_{\beta'} ( z|w, y=1, v=j) \left( p_{\beta^*} ( v=j|z, w, y=1)  -p_{\beta'} ( v=j|z, w, y=1)  \right) \right]   \\
&+\E \left[  \frac{p_{\beta^*} ( z|w, y=1, v=K+1) \log p_{\beta^*} ( z|w, y=1, v=K+1)  }{\sum_{j=1}^K p_{\beta'} ( z|w, y=1, v=j) +  p_{\beta^*} ( z|w, y=1, v=K+1)} \right]  \bigg|   \\
&\leq  \delta \cdot  \E \left[  \sum_{j=1}^K  \bigg|  \log p_{\beta'} ( z|w, y=1, v=j)\bigg| A (z,w,v=j)  \right]    \\
&+  |\delta( \log \delta) | \E \left[  \frac{1}{\sum_{j=1}^K p_{\beta'} ( z|w, y=1, v=j)}  \right] =o(1)\>,\label{eq:delta-log-delta}
\end{align}
where the last inequality follows from $|x\log x|$ being increasing for $x\le 1/e$ and in $o(1)$ we consider $\delta \rightarrow 0$.

Next we study $\epsilon_K^{(2)}$. For shorthand, let us define
\begin{align}
&\log \left( (K+1) p_{\beta^*} ( v=K+1|z, w, y=1)\right) \\
&= \log \left(\frac{(K+1)  p_{\beta^*} ( z|w, y=1, v=K+1)}{\sum_{j=1}^K p_{\beta'} ( z|w, y=1, v=j) + p_{\beta^*} ( z|w, y=1, v=K+1)} \right) \\
&= \log p_{\beta^*} ( z|w, y=1, v=K+1)  + B(z,w)
\end{align}
and note that
\begin{align}
&|B(z,w)|  =\bigg| \log \left(\frac{(K+1)  }{\sum_{j=1}^K p_{\beta'} ( z|w, y=1, v=j) + p_{\beta^*} ( z|w, y=1, v=K+1)} \right)  \bigg| \\
& \leq \max\left\{  \bigg| \log \left(\frac{(K+1)  }{\sum_{j=1}^K p_{\beta'} ( z|w, y=1, v=j)} \right) \bigg|, \right. \\
&\qquad \qquad  \left. \bigg| \log \left(\frac{(K+1)  }{\sum_{j=1}^K p_{\beta'} ( z|w, y=1, v=j)+1/e} \right) \bigg|   \right\} \\
&= C(z,w).
\end{align}
We have
 \begin{align}
&\epsilon_K^{(2)}\\
&= \E \left[ \sum_{j=1}^K p_{\beta'} ( v=j|z, w, y=1) \log \left( K p_{\beta'} ( v=j|z, w, y=1) \right)\right] \\
&-\E \left[ \sum_{j=1}^K p_{\beta^*} ( v=j|z, w, y=1) \log \left( (K+1) (p_{\beta^*} ( v=j|z, w, y=1) )\right)\right]  \\
 &-\E \left[ p_{\beta^*} ( v=K+1|z, w, y=1)\log \left( (K+1) p_{\beta^*} ( v=K+1|z, w, y=1)\right)\right] \\
 &=\E \left[ \sum_{j=1}^K (\log  K) p_{\beta'} ( v=j|z, w, y=1) -  (\log  (K+1)) p_{\beta^*} ( v=j|z, w, y=1) \right] \\
&+\E  \Bigg[ \sum_{j=1}^K p_{\beta'} ( v=j|z, w, y=1) \log  p_{\beta'} ( v=j|z, w, y=1) \\
&\qquad \qquad \qquad \qquad \qquad \quad-  p_{\beta^*} ( v=j|z, w, y=1)  \log  p_{\beta^*} ( v=j|z, w, y=1) \Bigg] \\
&-\E \left[ p_{\beta^*} ( v=K+1|z, w, y=1)\log \left( (K+1) p_{\beta^*} ( v=K+1|z, w, y=1)\right)\right] \\
& \geq \log (K) - (\log  (K+1))  \E\left[   \sum_{j=1}^K p_{\beta^*} ( v=j|z, w, y=1) \right] \\
&+\E \left[ \sum_{j=1}^K (p_{\beta'} ( v=j|z, w, y=1)-  p_{\beta^*} ( v=j|z, w, y=1))   \log  (p_{\beta'} ( v=j|z, w, y=1) ) \right] \label{eq:explain1}  \\
&- \bigg| \E\left[  \frac{p_{\beta^*} ( z|w, y=1, v=K+1) \log p_{\beta^*} ( z|w, y=1, v=K+1) }{\sum_{j=1}^K p_{\beta'} ( z|w, y=1, v=j) +  p_{\beta^*} ( z|w, y=1, v=K+1)}  \right]  \bigg| \\
&- \bigg| \E\left[  \left( \frac{p_{\beta^*} ( z|w, y=1, v=K+1)}{\sum_{j=1}^K p_{\beta'} ( z|w, y=1, v=j) +  p_{\beta^*} ( z|w, y=1, v=K+1)}  \right)B(z,w)\right]  \bigg| \\
& \geq \log (K) -  \log  (K+1) \\
&- \delta \cdot \E \left[\sum_{j=1}^K A(z,w,v=j) \bigg| \log  (p_{\beta'} ( v=j|z, w, y=1) ) \bigg| \right]  \label{eq:explain2}\\
&- \delta  \left(\log \delta \right)  \E\left[    \frac{1}{\sum_{j=1}^K p_{\beta'} ( z|w, y=1, v=j) } \right] \label{eq:explain3}  \\
&-\delta \cdot \E\left[     \frac{1}{\sum_{j=1}^K p_{\beta'} ( z|w, y=1, v=j) }  C(z,w)\right] \\
& =\log \frac{K}{K+1} + o(1)\>.
\end{align}
In \eqref{eq:explain1} we use \eqref{eq:v-post-inq}, in \eqref{eq:explain2} we rely on \eqref{eq:bound-diff}, and in \eqref{eq:explain3} we use \eqref{eq:delta-log-delta} again.

To summarize, we have $\epsilon_K \ge -|\epsilon_K^{(1)}| + \epsilon_K^{(2)} \ge -o(1) + o(1) + \log \frac{K}{K+1} = \log \frac{K}{K+1} + o(1)$. Thus $\epsilon_K \ge \log \frac{K}{K+1}$.
\end{proof}